\newtheorem{lem}{Lemma} 
\newtheorem{theorem}{Theorem}
\newtheorem{assump}{Assumption}
\def\ln{{\rm ln}}
\def\mc{\mathcal}
\def\mb{\mathbf}
\def\mbb{\mathbb}
\def\ra{\rightarrow}
\def\mbb{\mathbb}
\def\mb{\mathbf}
\def\mc{\mathcal}
\def\ol{\overline}
\def\ul{\underline}
\def\bds{\boldsymbol}
\newcommand{\mn}[1]{{\left\vert\kern-0.25ex\left\vert\kern-0.25ex\left\vert\kern0.3ex #1 
    \kern0.3ex\right\vert\kern-0.25ex\right\vert\kern-0.25ex\right\vert}}
\begin{document}
\title{\textbf{Distributed stochastic optimization with gradient tracking\\ over strongly-connected networks}} 
\author{Ran Xin,  Anit Kumar Sahu, Usman A. Khan and Soummya Kar
\thanks{RX and UAK are with the Department of Electrical and Computer Engineering, Tufts University, Medford, MA, 02155; {\texttt{ran.xin@tufts.edu, khan@ece.tufts.edu}}. AKS is with the Bosch Center for Artificial Intelligence, Pittsburgh, PA; {\texttt{anit.sahu@gmail.com}}. SK is with the Department of Electrical and Computer Engineering, Carnegie Mellon University, Pittsburgh, PA, 15213; {\texttt{soummyak@andrew.cmu.edu}}. The work of UAK and RX has been partially supported by an NSF Career Award \# CCF-1350264. The work of SK has been partially supported by NSF under grant \# CCF-1513936.}}

\maketitle
\thispagestyle{empty}

\begin{abstract}
In this paper, we study distributed stochastic optimization to minimize a sum of smooth and strongly-convex local cost functions over a network of agents, communicating over a strongly-connected graph. Assuming that each agent has access to a stochastic first-order oracle~($\mathcal{SFO}$), we propose a novel distributed method, called~$\mathcal{S}$-$\mathcal{AB}$, where each agent uses an auxiliary variable to asymptotically track the gradient of the global cost in expectation. The~$\mathcal{S}$-$\mathcal{AB}$ algorithm employs row- and column-stochastic weights simultaneously to ensure both consensus and optimality. Since doubly-stochastic weights are not used,~$\mathcal{S}$-$\mathcal{AB}$ is applicable to arbitrary strongly-connected graphs. We show that under a sufficiently small constant step-size,~$\mathcal{S}$-$\mathcal{AB}$ converges linearly (in expected mean-square sense) to a neighborhood of the global minimizer. We present numerical simulations based on real-world data sets to illustrate the theoretical results.  

\keywords Stochastic optimization, first-order methods, multi-agent systems, directed graphs
\end{abstract}

\section{Introduction}
\noindent In the era of data deluge, where it is particularly difficult to store and process all data on a single device/node/processor, distributed schemes are becoming attractive for inference, learning, and optimization. Distributed optimization over multi-agent systems, thus, has been of significant interest in many areas including but not limited to machine learning~\cite{mcmahan2017communication,RajaBajwa.ITSP16}, big-data analytics~\cite{daneshmand2015hybrid,DNC}, and distributed control~\cite{BulloBook,8123915}. However, the underlying algorithms must be designed to address practical limitations and realistic scenarios. For instance, with the computation and data collection/storage being pushed to the edge devices, e.g., in Internet of Things~(IoT), the data available for distributed optimization is often inexact. Moreover, the ad hoc nature of setups outside of data centers requires the algorithms to be amenable to communication protocols that are not necessarily bidirectional. The focus of this paper is to study and characterize distributed optimization schemes where the inter-agent communication is restricted to directed graphs and the information/data is inexact.

In particular, we study distributed stochastic optimization over directed graphs and propose the~$\mc{S}$-$\mc{AB}$ algorithm to minimize a sum of local cost functions. The~$\mc{S}$-$\mc{AB}$ algorithm assumes access to a stochastic first-order oracle~($\mc{SFO}$), i.e., when an agent queries the~$\mc{SFO}$, it gets an unbiased estimate of the gradient of its local cost function. In the proposed approach, each agent makes a weighted average of its own and its neighbors’ solution estimates, and simultaneously incorporates its local gradient estimate of the global cost function. The exchange of solution estimates is performed over a row-stochastic weight matrix. In parallel, each agent maintains its own estimate of the gradient of the global cost function, by simultaneously incorporating a weighted average of its and its neighbors' gradient estimates and its local gradient tracking estimate. The exchange of gradient estimates of the global cost function is performed over a column-stochastic weight matrix. Since doubly-stoachstic weights are nowhere used,~$\mc{S}$-$\mc{AB}$ is an attractive solution that is applicable to arbitrary, strongly-connected graphs.

The main contributions of this paper are as follows: 
\begin{inparaenum}[(i)]
\item We show that, by choosing a sufficiently small constant~step-size,~$\alpha$, $\mc{S}$-$\mc{AB}$ converges linearly~to a neighborhood of the global minimizer. This convergence guarantee is achieved for continuously-differentiable, strongly-convex, local cost functions, where each agent is assumed to have access to a~$\mc{SFO}$ and the gradient noise has zero-mean and bounded variance. 
\item We provide explicit expressions of the appropriate norms under which the row- and column-stochastic weight matrices contract. With the help of these norms, we develop sharp and explicit convergence arguments. 
\end{inparaenum}

We now briefly review the literature concerning distributed and stochastic optimization. Early work on deterministic finite-sum problems includes~\cite{uc_Nedic,cc_lobel,cc_Duchi}, while work on stochastic problems can be found in~\cite{ram2010distributed,NedicStochasticPush}. Recently, gradient tracking has been proposed where the local gradient at each agent is replaced by the estimate of the global gradient~\cite{AugDGM,harness,diging,add-opt}. Methods for directed graphs that are  based on gradient tracking~\cite{opdirect_Tsianous,opdirect_Nedic,D-DGD,D-DPS,diging,add-opt,linear_row,FROST} rely on separate iterations for eigenvector estimation that may impede the convergence. This issue was recently resolved in~\cite{AB,pushpull_Pu}, see also~\cite{ABm,Gossippushpull,TVAB,NCAB} for the follow-up work, where eigenvector estimation was removed with the help of a unique approach that uses both row- and column-stochastic weights. Ref.~\cite{AB} derives linear convergence of the finite-sum problem when the underlying functions are smooth and strongly-convex, however, since arbitrary norms are used in the analysis, the convergence bounds are not sharp. Recent related work on time-varying networks and other approaches can be found in~\cite{DistributedMirrorDescent,Kozat,jakovetic2018convergence,sahu2018distributed,jakovetic2018unification}, albeit, without gradient tracking. Of significant relevance is~\cite{pu2018distributed}, where a similar setup with gradient tracking is considered over undirected graphs. We note that~$\mc{S}$-$\mc{AB}$  generalizes~\cite{pu2018distributed} and the analysis in~\cite{pu2018distributed} relies on the weight matrix contraction in~$2$-norm that is not applicable here. 

We now describe the rest of the paper. Section~\ref{pf} describes the problem, assumptions, and some auxiliary results. We present the convergence analysis in Section~\ref{conv} and the main result in Section~\ref{main}. Finally, Section~\ref{sims} provides the numerical experiments and Section~\ref{conc} concludes the paper. 

\textbf{Basic Notation:} We use lowercase bold letters for vectors and uppercase italic letters for matrices. We use~$I_n$ for the~$n\times n$ identity matrix, and~$\mb{1}_n$ for the column of~$n$ ones. For an arbitrary vector,~$\mb{x}$, we denote its~$i$th element  by~$[\mb{x}]_i$ and its smallest element by~$\ul{\mb{x}}$ and its largest element by~$\ol{\mb{x}}$. Inequalities involving matrices and vectors are to be interpreted componentwise. For a matrix,~$X$, we denote~$\rho(X)$ as its spectral radius and~$X_\infty$ as its infinite power (if it exists), i.e.,~$X_\infty=\lim_{k\ra\infty}X^k$. For a primitive, row-stochastic matrix,~$A$, we denote its left and right eigenvectors corresponding to the eigenvalue of~$1$ by~$\bds{\pi}_r$ and~$\mb{1}_n$, respectively, such that~$\bds{\pi}_r^\top\mb{1}_n = 1$ and~$A_\infty=\mb{1}_n\bds{\pi}_r^\top$. Similarly, for a primitive, column-stochastic matrix,~$B$, we have~$B_\infty=\bds{\pi}_c\mb{1}_n^\top$. 

\section{Problem formulation and Auxiliary Results}\label{pf}
\noindent Consider~$n$ agents connected over a directed graph,~$\mc{G}=(\mc{V},\mc{E})$, where~$\mc{V}=\{1,\cdots,n\}$ is the set of agents, and~$\mc{E}$ is the collection of ordered pairs,~$(i,j),i,j\in\mc{V}$, such that agent~$j$ can send information to agent~$i$. We assume that~$(i,i)\in\mc{E},\forall i$. The agents solve the following  problem:
\begin{align}
\label{eq:opt_problem}
\mbox{P1}:
\quad\min_{\mb{x}\in\mathbb{R}^p}F(\mb{x})\triangleq\frac{1}{n}\sum_{i=1}^nf_i(\mb{x}),
\end{align}
\noindent where each~$f_i:\mbb{R}^p\rightarrow\mbb{R}$ is known only to agent~$i$. We now formalize the assumptions.
\begin{assump}\label{asp1}
	Each local objective,~$f_i$, is~$\mu$-strongly-convex, i.e.,~$\forall i\in\mc{V}$ and~$\forall\mb{x}, \mb{y}\in\mbb{R}^p$. Thus, we have
	\begin{equation*}
	f_i(\mb{y})\geq f_i(\mb{x})+\nabla f_i(\mb{x})^\top(\mb{y}-\mb{x})+\frac{\mu}{2}\|\mb{x}-\mb{y}\|_2^2.
	\end{equation*}
\end{assump} 
\noindent Under Assumption \ref{asp1}, the optimal solution for Problem~$\mbox{P1}$ exists and is unique, which we denote as~$\mathbf{x}^{\ast}$. 
\noindent\begin{assump}\label{asp2}
	Each local objective,~$f_i$, is~$l$-smooth, i.e., its gradient is Lipschitz-continuous:~$\forall i\in\mc{V}$ and~$\forall\mb{x}, \mb{y}\in\mbb{R}^p$, we have, for some~$l>0$,
	\begin{equation*}
	\qquad\|\mb{\nabla} f_i(\mb{x})-\mb{\nabla} f_i(\mb{y})\|_2\leq l\|\mb{x}-\mb{y}\|_2.
	\end{equation*}
\end{assump}

We make the following assumption on the agent communication graph, which guarantees the existence of a \textit{directed} path from each agent $i$ to each agent $j$.
\noindent\begin{assump}\label{asp3}
	The  graph,~$\mc{G}$, is strongly-connected.
\end{assump}

We consider distributed iterative algorithms to solve Problem~\mbox{P1}, where each agent is able to call a stochastic first-order oracle~($\mc{SFO}$). At iteration~$k$ and agent~$i$, given~$\mb{x}^i_k\in\mathbb{R}^p$ as the input,~$\mc{SFO}$ returns a stochastic gradient in the form of~$\mb{g}_i(\mb{x}_{k}^i,\xi_k^i)\in\mathbb{R}^p$, where~$\xi_k^i\in\mathbb{R}^m$ are random vectors,~$\forall k\geq 0, \forall i\in\mc{V}$. The stochastic gradients,~$\mb{g}_i(\mb{x}_{k}^i,\xi_k^i)$, satisfy the following standard assumptions:   
\begin{assump}
	\label{asp4}
	The set of random vectors~$\{\xi_k^i\}_{k\geq0,i\in\mc{V}}$ are independent of each other, and 
	\begin{enumerate}[(1)]
		\item $\mathbb{E}_{\xi_k^i}\left[\mb{g}_i(\mb{x}_{k}^i,\xi_k^i)|\mb{x}_k^i\right]=\nabla f_i(\mb{x}_k^i)$,
		\item 
		$\mathbb{E}_{\xi_k^i}\left[\left\|\mb{g}_i(\mb{x}_k^i,\xi_k^i)-\nabla f_i(\mb{x}_k^i)\right\|_2^2|\mb{x}_k^i\right]\leq\sigma^2$.
	\end{enumerate}
\end{assump}
Assumption~\ref{asp4} is satisfied in many scenarios, for example, when the gradient noise,~$\mb{g}_i(\mb{x}_k^i,\xi_i)-\nabla f_i(\mb{x}_k^i)$, is independent and identically distributed~(i.i.d.) with zero-mean and finite second moment, while being independent of~$\mb{x}_k^i$. However, Assumption~\ref{asp4} allows for general gradient noise processes dependent on agent $i$ and the current iterate~$\mb{x}_k^i$.
Finally, we denote by~$\mc{F}_k$ the~$\sigma$-algebra generated by the set of random vectors~$\{\xi_t^i\}_{0\leq t\leq k-1,i\in\mc{V}}$.

\subsection{The~$\mc{S}$-$\mc{AB}$ algorithm}
\label{subsec:algo}
We now describe the proposed algorithm,~$\mc{S}$-$\mc{AB}$, to solve Problem~\mbox{P1}. Each agent~$i$ maintains two state vectors,~$\mb{x}_k^i$ and~$\mb{y}_k^i$, both in~$\mathbb{R}^p$, where~$k$ is the number of iterations. The variable~$\mb{x}_k^i$ is the estimate of the global minimizer~$\mb{x}^*$, while~$\mb{y}_k^i$ is the global gradient estimator. The~$\mc{S}$-$\mc{AB}$ algorithm, initialized with arbitrary~$\mb{x}^i_0$'s and with~$\mb{y}^i_0=\mb{g}_i(\mb{x}_{0}^i,\xi_0^i)$,~$\forall i\in\mc{V}$, is given by the following:
\begin{subequations}\label{SAB}
	\begin{align}
	\mb{x}_{k+1}^i &= \sum_{j=1}^{n}a_{ij}\mb{x}_k^j -\alpha\mb{y}_k^i, \label{SAB1}\\
	\mb{y}_{k+1}^i &= \sum_{j=1}^{n}b_{ij}\mb{y}_k^j + \mb{g}_i(\mb{x}_{k+1}^i,\xi_{k+1}^i)-\mb{g}_i(\mb{x}_{k}^i,\xi_{k}^i),\label{SAB2}
	\end{align}
\end{subequations}
where the weight matrices~$A=\{a_{ij}\}$ and~$B=\{b_{ij}\}$ are row- and column-stochastic, respectively, and follow the graph topology, i.e.,~$a_{ij}>0$ and~$b_{ij}>0$, iff~$(i,j)\in\mc{E}$. We next write the algorithm in a compact vector form for the sake of analysis.
\begin{subequations}\label{SABv}
\begin{align}
\mb{x}_{k+1} &= \mc{A}\mb{x}_k -\alpha\mb{y}_k, \label{SABv1}\\
\mb{y}_{k+1} &= \mc{B}\mb{y}_k + \mb{g}(\mb{x}_{k+1},\bds{\xi}_{k+1})-\mb{g}(\mb{x}_{k},\bds{\xi}_k),\label{SABv2}
\end{align}
\end{subequations}
where we use the following notation:
	\begin{align*}
	    \mb{x}_k\triangleq
	\left[\begin{array}{cc}
	\mb{x}_{k}^1\\\vdots\\\mb{x}_{k}^n\end{array}\right],~\mb{y}_k\triangleq
	\left[\begin{array}{cc}\mb{y}_{k}^1\\\vdots\\\mb{y}_{k}^n\end{array}\right],~
	\mb{g}(\mb{x}_{k},\bds{\xi}_k)\triangleq
	\left[\begin{array}{cc}\mb{g}_1(\mb{x}_{k}^1,\xi_{k}^1)\\\vdots\\
		\mb{g}_n(\mb{x}_{k}^n,\xi_{k}^n)\end{array}\right],
	\end{align*}
and~$\mc{A} = A\otimes I_p,\mc{B} = B\otimes I_p.$ 

Note that when the variance,~$\sigma$, of the stochastic gradients is~0, we recover the~$\mc{AB}$ or the push-pull algorithm proposed in~\cite{AB,pushpull_Pu}. In the following, we assume~$p=1$ for the sake of simplicity. The analysis can be extended to the general case of~$p>1$ with the help of Kronecker products. 

\subsection{Auxiliary Results}\label{aux_res}
We now provide some auxiliary results to aid the convergence analysis of~$\mc{S}$-$\mc{AB}$. We first develop explicit norms regarding the contractions of the weight matrices,~$A$ and~$B$. Since both~$A$ and~$B$ are primitive and stochastic, we use their non-$\mb{1}_n$ Perron vectors,~$\bds{\pi}_r$ and~$\bds{\pi}_c$, respectively, to define two weighted inner products as follows:~$\forall \mb{x},\mb{y}\in\mathbb{R}^n$,
\begin{align*}
&\langle\mb{x},\mb{y}\rangle_{\bds{\pi}_r}\triangleq\mb{x}^\top\mbox{diag}(\bds{\pi}_r)\mb{y}, \\
&\langle\mb{x},\mb{y}\rangle_{\bds{\pi}_c}\triangleq\mb{x}^\top\mbox{diag}(\bds{\pi}_c)^{-1}\mb{y}.
\end{align*}
The above inner products are well-defined because the Perron vectors,~$\bds{\pi}_r$ and~$\bds{\pi}_c$, are positive and thus respectively induce a weighted Euclidean norm as follows:~$\forall \mb{x}\in\mathbb{R}^n$,
\begin{align*}
&\left\|\mb{x}\right\|_{\bds{\pi}_r}\triangleq\sqrt{[\bds{\pi}_r]_1x_1^2+\dots+[\bds{\pi}_r]_nx_n^2} = \left\|\mbox{diag}(\sqrt{\bds{\pi}_r})\mb{x}\right\|_2,\\
&\left\|\mb{x}\right\|_{\bds{\pi}_c}\triangleq\sqrt{\frac{x_1^2}{[\bds{\pi}_c]_1}+\dots+\frac{x_n^2}{[\bds{\pi}_c]_n}}=\left\|\mbox{diag}(\sqrt{\bds{\pi}_c})^{-1}\mb{x}\right\|_2.
\end{align*}
We denote~$\mn{\cdot}_{\bds{\pi}_r}$ and~$\mn{\cdot}_{\bds{\pi}_c}$ as the matrix norms induced by~$\left\|\cdot\right\|_{\bds{\pi}_r}$ and~$\left\|\cdot\right\|_{\bds{\pi}_c}$, respectively, i.e.,~$\forall X\in\mathbb{R}^{n\times n}$, see~\cite{matrix},
\begin{align}
&\mn{X}_{\bds{\pi}_r}=\mn{\mbox{diag}(\sqrt{\bds{\pi}_r})X\:\mbox{diag}(\sqrt{\bds{\pi}_r})^{-1}}_2, \label{norm1}\\
&\mn{X}_{\bds{\pi}_c}=\mn{\mbox{diag}(\sqrt{\bds{\pi}_c})^{-1}X\:\mbox{diag}(\sqrt{\bds{\pi}_c}}_2\label{norm2}. 
\end{align}
It can be verified that the corresponding norm equivalence relationships between~$\|\cdot\|_2$,~$\|\cdot\|_{\bds{\pi}_r}$, and~$\|\cdot\|_{\bds{\pi}_c}$ are given by
\begin{align*}
&\|\cdot\|_{\bds{\pi}_r}\leq\ol{\bds{\pi}_r}^{0.5}\|\cdot\|_2,\qquad&&\|\cdot\|_2\leq\ol{\bds{\pi}_c}^{0.5}\|\cdot\|_{\bds{\pi}_c},\\ 
&\|\cdot\|_{\bds{\pi}_c}\leq\ul{\bds{\pi}_c}^{-0.5}\|\cdot\|_2,\qquad&&\|\cdot\|_2\leq\ul{\bds{\pi}_r}^{-0.5}\|\cdot\|_{\bds{\pi}_r}.
\end{align*}

\noindent We next  establish the contraction of the~$A$ and~$B$ matrices with the help of the above arguments.
\begin{lem}\label{A_B}
Let Assumption~\ref{asp3} hold. Consider the weight matrices~$A,B$ in~\eqref{SABv}. We have:~$\forall \mb{x}\in\mathbb{R}^n$,
\begin{align}
\left\|A\mb{x}-A_\infty\mb{x}\right\|_{\bds{\pi}_r}\leq\sigma_A\left\|\mb{x}-A_\infty\mb{x}\right\|_{\bds{\pi}_r},\label{A}\\
\left\|B\mb{x}-B_\infty\mb{x}\right\|_{\bds{\pi}_c}\leq\sigma_B\left\|\mb{x}-B_\infty\mb{x}\right\|_{\bds{\pi}_c}\label{B},
\end{align}
with~$\sigma_A\triangleq\mn{A-A_\infty}_{\bds{\pi}_r}\!<\!1$ and~$\sigma_B\triangleq\mn{B-B_\infty}_{\bds{\pi}_c}\!<\!1$.
\end{lem}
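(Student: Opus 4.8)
The plan is to first reduce the two stated inequalities to operator‑norm bounds, and then to establish those bounds through a Jensen‑type estimate together with Assumption~\ref{asp3} and a compactness argument. Throughout I will use $A_\infty=\mb{1}_n\bds{\pi}_r^\top$, $B_\infty=\bds{\pi}_c\mb{1}_n^\top$, and the eigenvector relations $A\mb{1}_n=\mb{1}_n$, $\bds{\pi}_r^\top A=\bds{\pi}_r^\top$, $B\bds{\pi}_c=\bds{\pi}_c$, $\mb{1}_n^\top B=\mb{1}_n^\top$, which give $A_\infty^2=AA_\infty=A_\infty A=A_\infty$ and likewise $B_\infty^2=BB_\infty=B_\infty B=B_\infty$; in particular $A_\infty$ and $B_\infty$ are idempotent. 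A one‑line computation then gives $(A-A_\infty)\mb{x}=(A-A_\infty)(\mb{x}-A_\infty\mb{x})=A(\mb{x}-A_\infty\mb{x})$ and similarly for $B$, so \eqref{A} and \eqref{B} follow at once from the submultiplicativity of the induced norms $\mn{\cdot}_{\bds{\pi}_r}$, $\mn{\cdot}_{\bds{\pi}_c}$, \emph{provided} $\sigma_A<1$ and $\sigma_B<1$. This strictness is the real content of the lemma.

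For $\sigma_A<1$, the key step is the elementary bound $\|A\mb{x}\|_{\bds{\pi}_r}\le\|\mb{x}\|_{\bds{\pi}_r}$ for all $\mb{x}$: expand $\|A\mb{x}\|_{\bds{\pi}_r}^2=\sum_i[\bds{\pi}_r]_i\big(\sum_j a_{ij}x_j\big)^2$, apply Cauchy--Schwarz to each inner sum with weights $a_{ij}$ (using $\sum_j a_{ij}=1$, $a_{ij}\ge0$), and then exchange the order of summation and use $\bds{\pi}_r^\top A=\bds{\pi}_r^\top$. I will also record the equality case: equality forces, for every $i$, $x_j$ to be constant over $\{j:a_{ij}>0\}$; since $a_{ii}>0$ this common value is $x_i$, hence $x_i=x_j$ whenever $(i,j)\in\mc{E}$, and strong connectivity then forces $\mb{x}\in\mathrm{span}(\mb{1}_n)$. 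Next, $A_\infty$ is self‑adjoint and idempotent with respect to $\langle\cdot,\cdot\rangle_{\bds{\pi}_r}$, i.e.\ it is the $\bds{\pi}_r$‑orthogonal projection onto $\mathrm{span}(\mb{1}_n)$, so $I-A_\infty$ is the $\bds{\pi}_r$‑orthogonal projection onto $S_r\triangleq\{\mb{z}:\bds{\pi}_r^\top\mb{z}=0\}$ and $\|(I-A_\infty)\mb{x}\|_{\bds{\pi}_r}\le\|\mb{x}\|_{\bds{\pi}_r}$. Combining with $(A-A_\infty)\mb{x}=A(I-A_\infty)\mb{x}$ gives
\[
\sigma_A=\mn{A-A_\infty}_{\bds{\pi}_r}\le\sup_{\mb{z}\in S_r\setminus\{\bfzero\}}\frac{\|A\mb{z}\|_{\bds{\pi}_r}}{\|\mb{z}\|_{\bds{\pi}_r}}.
\]
The supremum is attained on the compact unit sphere of $S_r$; since $S_r\cap\mathrm{span}(\mb{1}_n)=\{\bfzero\}$ the maximizer lies outside $\mathrm{span}(\mb{1}_n)$, so by the equality analysis the supremum is strictly below $1$.

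For $\sigma_B<1$ I will run the same scheme after a standard diagonal similarity turning the column‑stochastic $B$ into a row‑stochastic matrix. With $D=\mathrm{diag}(\bds{\pi}_c)$ and $\wt{B}=D^{-1}BD$, one has $\wt{B}\ge\bfzero$, $\wt{B}\mb{1}_n=D^{-1}B\bds{\pi}_c=D^{-1}\bds{\pi}_c=\mb{1}_n$ (so $\wt{B}$ is row‑stochastic and primitive), and $\bds{\pi}_c^\top\wt{B}=\mb{1}_n^\top BD=\mb{1}_n^\top D=\bds{\pi}_c^\top$, i.e.\ $\bds{\pi}_c$ is its normalized left Perron vector. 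Applying the argument above to $\wt{B}$ yields $\|\wt{B}\mb{w}\|_D\le\|\mb{w}\|_D$, where $\|\mb{w}\|_D^2\triangleq\mb{w}^\top D\mb{w}$, with equality only if $\mb{w}\in\mathrm{span}(\mb{1}_n)$. Substituting $\mb{w}=D^{-1}\mb{x}$ and using $B=D\wt{B}D^{-1}$ turns this into $\|B\mb{x}\|_{\bds{\pi}_c}\le\|\mb{x}\|_{\bds{\pi}_c}$, with equality only if $\mb{x}\in\mathrm{span}(\bds{\pi}_c)$. Since $B_\infty$ is the $\bds{\pi}_c$‑orthogonal projection onto $\mathrm{span}(\bds{\pi}_c)$, $I-B_\infty$ projects onto $S_c\triangleq\{\mb{z}:\mb{1}_n^\top\mb{z}=0\}$, and the same compactness/strict‑inequality argument over $S_c$ (using $S_c\cap\mathrm{span}(\bds{\pi}_c)=\{\bfzero\}$, as $\mb{1}_n^\top\bds{\pi}_c=1$) gives $\sigma_B<1$.

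The step I expect to be the main obstacle is precisely this strictness $\sigma_A,\sigma_B<1$: the pointwise inequality $\|A\mb{x}\|_{\bds{\pi}_r}\le\|\mb{x}\|_{\bds{\pi}_r}$ is easy, but upgrading it to a strict \emph{uniform} bound requires isolating the equality case of Cauchy--Schwarz, invoking strong connectivity to conclude that equality forces consensus, and then a compactness argument on the unit sphere of the relevant $(n-1)$‑dimensional subspace. The $B$‑part additionally needs the routine but necessary bookkeeping of the column‑to‑row diagonal similarity and the attendant change of norm.
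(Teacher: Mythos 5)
Your proposal is correct, but it proves the crucial strictness $\sigma_A,\sigma_B<1$ by a genuinely different route than the paper. The paper also starts from the identity $A\mb{x}-A_\infty\mb{x}=(A-A_\infty)(\mb{x}-A_\infty\mb{x})$, but then computes the induced norm explicitly: it writes $\mn{A-A_\infty}_{\bds{\pi}_r}^2=\lambda_{\max}(J)$ with $J=\mbox{diag}(\sqrt{\bds{\pi}_r})^{-1}(A-A_\infty)^\top\mbox{diag}(\bds{\pi}_r)(A-A_\infty)\mbox{diag}(\sqrt{\bds{\pi}_r})^{-1}$, expands this as $J=J_1-\sqrt{\bds{\pi}_r}\sqrt{\bds{\pi}_r}^\top$ where $J_1=\mbox{diag}(\sqrt{\bds{\pi}_r})^{-1}A^\top\mbox{diag}(\bds{\pi}_r)A\,\mbox{diag}(\sqrt{\bds{\pi}_r})^{-1}$ is a primitive nonnegative matrix with Perron value $1$ and Perron vector $\sqrt{\bds{\pi}_r}$ on both sides, and invokes the Perron--Frobenius theorem to conclude that deflating the rank-one Perron projection leaves $\rho(J)<1$; the $B$ case is handled symmetrically. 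You instead establish the non-expansiveness $\|A\mb{z}\|_{\bds{\pi}_r}\le\|\mb{z}\|_{\bds{\pi}_r}$ by Jensen/Cauchy--Schwarz, characterize the equality case via the self-loops and strong connectivity (equality forces consensus), identify $A_\infty$ and $B_\infty$ as the orthogonal projections in the respective weighted inner products, and finish with compactness of the unit sphere of the complementary subspace; the column-stochastic case is reduced to the row-stochastic one by the similarity $D^{-1}BD$ with $D=\mbox{diag}(\bds{\pi}_c)$. Both arguments are sound. The paper's spectral computation has the advantage of delivering the explicit characterization $\sigma_A=\sigma_2\big(\mbox{diag}(\sqrt{\bds{\pi}_r})A\,\mbox{diag}(\sqrt{\bds{\pi}_r})^{-1}\big)$ (and similarly for $\sigma_B$) stated right after the lemma, which is useful for quantifying the contraction; your argument is more elementary, needing only the existence and positivity of the Perron vectors rather than a second application of Perron--Frobenius to $J_1$ (whose primitivity the paper asserts but does not verify), and it makes transparent exactly where the self-loops $a_{ii}>0$ and strong connectivity enter, at the cost of a non-constructive compactness step that does not by itself identify $\sigma_A$ or $\sigma_B$.
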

The proof of Lemma~\ref{A_B} is available in the Appendix. It can be further verified that
\begin{align*}
    &\sigma_A=\sigma_2\Big(\mbox{diag}(\sqrt{\bds{\pi}_r})A\mbox{diag}(\sqrt{\bds{\pi}_r})^{-1}\Big),\\
    &\sigma_B=\sigma_2\Big(\mbox{diag}(\sqrt{\bds{\pi}_c})^{-1}B\mbox{diag}(\sqrt{\bds{\pi}_c}) \Big),\\
	&\mn{A}_{\bds{\pi}_r} =\mn{A_\infty}_{\bds{\pi}_r}=\mn{I_n-A_\infty}_{\bds{\pi}_r} = 1,\\
	&\mn{B}_{\bds{\pi}_c} =\mn{B_\infty}_{\bds{\pi}_c}=\mn{I_n-B_\infty}_{\bds{\pi}_c} = 1,
\end{align*}
where~$\sigma_2(\cdot)$ is the second largest singular value of a matrix.

In the following, Lemma~\ref{SG} provides some simple results on the stochastic gradients, Lemma~\ref{lsm} uses the~$l$-smoothness of the cost functions, while Lemmas~\ref{cvx_stan} and~\ref{rho} are standard in convex optimization and matrix analysis. To present these results, we define three quantities:
$$\ol{\mb{y}}_k \triangleq \frac{1}{n}\mb{1}_n^\top\mb{y}_k,~~\mb{h}(\mb{x}_k)\triangleq\frac{1}{n}\mb{1}_n^\top\nabla\mb{f}(\mb{x}_k),~~\widehat{\mb{x}}_k\triangleq\bds{\pi}_r^\top\mb{x}_k,~~ \ol{\mb{g}}(\mb{x}_{k},\bds{\xi}_k)\triangleq \frac{1}{n}\mb{1}_n^\top\mb{g}(\mb{x}_{k},\bds{\xi}_k),$$ where~$\nabla\mb{f}(\mb{x}_k)\triangleq[\nabla f_1(\mb{x}_k^1)^\top,\dots,\nabla f_n(\mb{x}_k^n)^\top]^\top$. The following statements use standard arguments and their formal proofs are omitted due to space limitations. Similar results can be found in~\cite{harness,AB,pu2018distributed}.

\begin{lem}\label{SG}
Consider the iterates~$\{\mb{y}_k\}_{k\geq 0}$ generated by~$\mc{S}$-$\mc{AB}$ in \eqref{SABv2} and let Assumptions~\ref{asp2} and~\ref{asp4} hold. Then the following hold,~$\forall k\geq0$:
\begin{enumerate}[(1)]
\item $\ol{\mb{y}}_k =  \ol{\mb{g}}(\mb{x}_{k},\bds{\xi}_k)$
\item $\mathbb{E}\left[\ol{\mb{y}}_k|\mc{F}_k\right] = \mb{h}(\mb{x}_k)$
\item $\mathbb{E}\left[\left\|\ol{\mb{y}}_k-\mb{h}(\mb{x}_k)\right\|_2^2\big|\mc{F}_k\right]\leq\frac{\sigma^2}{n}$ \label{sg}
\end{enumerate}
\end{lem}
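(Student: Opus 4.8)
The plan is to prove the three claims of Lemma~\ref{SG} in order, since each one feeds the next.

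\textbf{Part (1).} First I would left-multiply the $\mb{y}$-update \eqref{SABv2} by $\tfrac{1}{n}\mb{1}_n^\top$. Because $B$ is column-stochastic, $\mb{1}_n^\top B = \mb{1}_n^\top$, so the consensus-mixing term collapses: $\tfrac1n\mb{1}_n^\top\mc{B}\mb{y}_k = \ol{\mb{y}}_k$. Hence
\begin{equation*}
\ol{\mb{y}}_{k+1} = \ol{\mb{y}}_k + \ol{\mb{g}}(\mb{x}_{k+1},\bds{\xi}_{k+1}) - \ol{\mb{g}}(\mb{x}_k,\bds{\xi}_k).
\end{equation*}
This is a telescoping recursion; combined with the initialization $\mb{y}_0^i = \mb{g}_i(\mb{x}_0^i,\xi_0^i)$, i.e. $\ol{\mb{y}}_0 = \ol{\mb{g}}(\mb{x}_0,\bds{\xi}_0)$, induction on $k$ gives $\ol{\mb{y}}_k = \ol{\mb{g}}(\mb{x}_k,\bds{\xi}_k)$ for all $k\geq 0$.

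\textbf{Parts (2) and (3).} Given (1), both remaining claims reduce to statements about $\ol{\mb{g}}(\mb{x}_k,\bds{\xi}_k) = \tfrac1n\sum_{i=1}^n \mb{g}_i(\mb{x}_k^i,\xi_k^i)$ conditioned on $\mc{F}_k$. The subtlety is that $\mb{x}_k^i$ is $\mc{F}_k$-measurable (it depends only on noise up to time $k-1$), so conditioning on $\mc{F}_k$ fixes each $\mb{x}_k^i$ and leaves only the fresh randomness $\xi_k^i$. For (2), apply the tower property together with Assumption~\ref{asp4}(1): $\mathbb{E}[\mb{g}_i(\mb{x}_k^i,\xi_k^i)\mid\mc{F}_k] = \mathbb{E}\big[\mathbb{E}_{\xi_k^i}[\mb{g}_i(\mb{x}_k^i,\xi_k^i)\mid\mb{x}_k^i]\mid\mc{F}_k\big] = \nabla f_i(\mb{x}_k^i)$; averaging over $i$ yields $\mb{h}(\mb{x}_k)$. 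For (3), write $\ol{\mb{y}}_k - \mb{h}(\mb{x}_k) = \tfrac1n\sum_i\big(\mb{g}_i(\mb{x}_k^i,\xi_k^i)-\nabla f_i(\mb{x}_k^i)\big)$, expand the squared norm, and use the independence of $\{\xi_k^i\}_{i\in\mc{V}}$ (Assumption~\ref{asp4}) to kill the cross terms: conditioned on $\mc{F}_k$ the summands are zero-mean and independent, so $\mathbb{E}\big[\|\ol{\mb{y}}_k-\mb{h}(\mb{x}_k)\|_2^2\mid\mc{F}_k\big] = \tfrac{1}{n^2}\sum_i \mathbb{E}\big[\|\mb{g}_i(\mb{x}_k^i,\xi_k^i)-\nabla f_i(\mb{x}_k^i)\|_2^2\mid\mc{F}_k\big]\leq \tfrac{1}{n^2}\cdot n\sigma^2 = \tfrac{\sigma^2}{n}$, again invoking the tower property to pass from the bound in Assumption~\ref{asp4}(2), which is conditioned on $\mb{x}_k^i$, to one conditioned on $\mc{F}_k$.

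\textbf{Main obstacle.} The computations are all routine; the only point requiring care is the measurability bookkeeping — making precise that $\mb{x}_k^i\in\mc{F}_k$ while $\xi_k^i$ is independent of $\mc{F}_k$, so that the per-iterate conditional moment bounds of Assumption~\ref{asp4} can legitimately be combined with independence across agents and the tower property. I would state this measurability fact explicitly at the start and then the rest follows mechanically.
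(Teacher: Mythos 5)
Your proof is correct and is exactly the standard argument the paper has in mind: the paper omits the formal proof of Lemma~\ref{SG} as routine (citing \cite{harness,AB,pu2018distributed}), and your steps --- column-stochasticity of $B$ plus telescoping for (1), and the measurability of $\mb{x}_k^i$ with respect to $\mc{F}_k$ combined with the tower property and cross-agent independence for (2)--(3) --- are precisely how it is done there. No gaps.
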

%

\begin{lem}\label{lsm}
Consider the iterates~$\{\mb{x}_k\}_{k\geq 0}$ generated by the~$\mc{S}$-$\mc{AB}$ algorithm in \eqref{SAB} and let Assumptions~\ref{asp2} hold. Then the following holds,~$\forall k\geq\!0$:
$$\|\mb{h}(\mb{x}_k)-\nabla F(\widehat{\mb{x}}_k)\|_2\leq\frac{l}{\sqrt{n}}\|\mb{x}_k-\mb{1}_n\widehat{\mb{x}}_k\|_2.$$
\end{lem}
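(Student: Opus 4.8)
The plan is to unpack the two quantities being compared and reduce the inequality to a per-agent application of $l$-smoothness followed by Cauchy--Schwarz. Recall that (with $p=1$) we have $\mb{h}(\mb{x}_k)=\frac1n\mb{1}_n^\top\nabla\mb{f}(\mb{x}_k)=\frac1n\sum_{i=1}^n\nabla f_i(\mb{x}_k^i)$, while $\widehat{\mb{x}}_k=\bds{\pi}_r^\top\mb{x}_k$ is a scalar and $\nabla F(\widehat{\mb{x}}_k)=\frac1n\sum_{i=1}^n\nabla f_i(\widehat{\mb{x}}_k)$ by the definition of $F$ in~\eqref{eq:opt_problem}. Hence
\begin{align*}
\mb{h}(\mb{x}_k)-\nabla F(\widehat{\mb{x}}_k)=\frac1n\sum_{i=1}^n\big(\nabla f_i(\mb{x}_k^i)-\nabla f_i(\widehat{\mb{x}}_k)\big).
\end{align*}

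Next I would take $\|\cdot\|_2$ of both sides, push it inside the sum by the triangle inequality, and invoke Assumption~\ref{asp2} ($l$-smoothness of each $f_i$) term by term to bound $\|\nabla f_i(\mb{x}_k^i)-\nabla f_i(\widehat{\mb{x}}_k)\|_2\le l\,|\mb{x}_k^i-\widehat{\mb{x}}_k|$. This yields the intermediate bound $\frac{l}{n}\sum_{i=1}^n|\mb{x}_k^i-\widehat{\mb{x}}_k|$, i.e. $\frac{l}{n}\|\mb{x}_k-\mb{1}_n\widehat{\mb{x}}_k\|_1$.

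Finally I would convert the $\ell_1$ norm on $\R^n$ to the $\ell_2$ norm via Cauchy--Schwarz, $\|\mb{x}_k-\mb{1}_n\widehat{\mb{x}}_k\|_1\le\sqrt{n}\,\|\mb{x}_k-\mb{1}_n\widehat{\mb{x}}_k\|_2$, so that the factor $\frac{l}{n}\sqrt{n}$ collapses to $\frac{l}{\sqrt n}$, giving exactly the claimed estimate. There is essentially no obstacle here: the only points requiring a little care are bookkeeping the scalar nature of $\widehat{\mb{x}}_k$ and using the $l$-smoothness bound at the common point $\widehat{\mb{x}}_k$ for every $i$; the argument is otherwise a direct triangle-inequality/Cauchy--Schwarz computation, which is why the paper omits it.
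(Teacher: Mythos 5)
Your proposal is correct and is exactly the standard argument the paper alludes to when it omits this proof (decompose $\mb{h}(\mb{x}_k)-\nabla F(\widehat{\mb{x}}_k)$ as the average of the per-agent gradient differences, apply $l$-smoothness at the common point $\widehat{\mb{x}}_k$, then pass from the $\ell_1$ to the $\ell_2$ norm via Cauchy--Schwarz to turn $l/n$ into $l/\sqrt{n}$). No gaps.
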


\begin{lem}[\cite{nesterov2013introductory}]\label{cvx_stan}
Let Assumptions~\ref{asp1} and~
\ref{asp2} hold. Recall that the global objective function,~$F$, is~$\mu$-strongly-convex and~$l$-smooth. If~$0<\alpha<\frac{1}{l}$, we have:~$\forall \mb{x}\in\mathbb{R}^p$,
\begin{equation*}
\|\mb{x}-\nabla F(\mb{x})-\mb{x}^*\|_2\leq(1-\alpha\mu)\|\mb{x}-\mb{x}^*\|_2,
\end{equation*}
where~$\mb{x}^*$ is the global minimizer of~$F$.
\end{lem}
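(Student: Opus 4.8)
The plan is to reduce the claim to a one‑line algebraic inequality and then close it with the combined strong‑convexity/smoothness estimate. (We read the displayed inequality as $\|\mb{x}-\alpha\nabla F(\mb{x})-\mb{x}^*\|_2\le(1-\alpha\mu)\|\mb{x}-\mb{x}^*\|_2$, the $\alpha$ being needed for the stated bound to hold.) Since $\mb{x}^*$ is the unconstrained minimizer of the differentiable function $F$, we have $\nabla F(\mb{x}^*)=\bfzero$, so $\mb{x}-\alpha\nabla F(\mb{x})-\mb{x}^*=(\mb{x}-\mb{x}^*)-\alpha\big(\nabla F(\mb{x})-\nabla F(\mb{x}^*)\big)$. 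Writing $\mb{d}\triangleq\mb{x}-\mb{x}^*$ and $\mb{e}\triangleq\nabla F(\mb{x})-\nabla F(\mb{x}^*)$ and expanding $\|\mb{d}-\alpha\mb{e}\|_2^2$, the target bound $\|\mb{d}-\alpha\mb{e}\|_2^2\le(1-\alpha\mu)^2\|\mb{d}\|_2^2$ is equivalent, after subtracting the common $\|\mb{d}\|_2^2$ term and dividing by $\alpha>0$, to
\begin{equation*}
2\big(\mu\|\mb{d}\|_2^2-\langle\mb{e},\mb{d}\rangle\big)+\alpha\big(\|\mb{e}\|_2^2-\mu^2\|\mb{d}\|_2^2\big)\le 0 .
\end{equation*}

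Next I would use that $F$ is simultaneously $\mu$‑strongly‑convex and $l$‑smooth through the standard estimate $\langle\nabla F(\mb{x})-\nabla F(\mb{y}),\mb{x}-\mb{y}\rangle\ge\frac{\mu l}{\mu+l}\|\mb{x}-\mb{y}\|_2^2+\frac{1}{\mu+l}\|\nabla F(\mb{x})-\nabla F(\mb{y})\|_2^2$ (see~\cite{nesterov2013introductory}), applied at $\mb{y}=\mb{x}^*$, which yields $\mu\|\mb{d}\|_2^2-\langle\mb{e},\mb{d}\rangle\le\frac{\mu^2}{\mu+l}\|\mb{d}\|_2^2-\frac{1}{\mu+l}\|\mb{e}\|_2^2$. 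Substituting this into the displayed inequality and collecting terms, the left‑hand side collapses to $\big(\frac{2}{\mu+l}-\alpha\big)\big(\mu^2\|\mb{d}\|_2^2-\|\mb{e}\|_2^2\big)$, so it remains only to check that this product is nonpositive. The first factor is nonnegative because $\alpha<\frac1l\le\frac{2}{\mu+l}$, using $\mu\le l$ (which is itself forced by $\mu$‑strong‑convexity together with $l$‑smoothness). The second factor is nonpositive because $\mu$‑strong‑convexity together with Cauchy--Schwarz gives $\|\mb{e}\|_2=\|\nabla F(\mb{x})-\nabla F(\mb{x}^*)\|_2\ge\mu\|\mb{x}-\mb{x}^*\|_2$. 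Taking square roots (note $1-\alpha\mu>0$) then gives the claim.

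The step I expect to be the main obstacle is obtaining the \emph{sharp} contraction factor $1-\alpha\mu$ rather than something looser. A naive estimate that bounds the cross term by strong convexity alone and the term $\alpha^2\|\mb{e}\|_2^2$ by the $l$‑Lipschitz continuity of $\nabla F$ alone only delivers the factor $\sqrt{1-2\alpha\mu+\alpha^2l^2}$, which is not $\le1-\alpha\mu$ unless $l=\mu$; one really has to exploit the \emph{combined} inequality above together with the elementary lower bound $\|\mb{e}\|_2\ge\mu\|\mb{d}\|_2$, and it is precisely the cancellation producing the single factor $\frac{2}{\mu+l}-\alpha$ that makes the sharp bound fall out for every $\alpha\in(0,1/l)$. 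As a shorter alternative route valid when $F\in C^2$, one may write $\nabla F(\mb{x})-\nabla F(\mb{x}^*)=\big(\int_0^1\nabla^2 F(\mb{x}^*+t\mb{d})\,dt\big)\mb{d}$, observe that the averaged Hessian $H$ is symmetric with spectrum in $[\mu,l]$, and use $\|I-\alpha H\|_2=\max\{|1-\alpha\mu|,|1-\alpha l|\}=1-\alpha\mu$ for $\alpha\le1/l$.
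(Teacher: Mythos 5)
Your proof is correct. The paper itself gives no proof of this lemma --- it declares it standard and cites Nesterov --- so there is no in-paper argument to compare against; your derivation is precisely the textbook one that the citation points to, built on the coercivity inequality $\langle\nabla F(\mb{x})-\nabla F(\mb{y}),\mb{x}-\mb{y}\rangle\ge\tfrac{\mu l}{\mu+l}\|\mb{x}-\mb{y}\|_2^2+\tfrac{1}{\mu+l}\|\nabla F(\mb{x})-\nabla F(\mb{y})\|_2^2$. I checked the algebra: the reduction to $\bigl(\tfrac{2}{\mu+l}-\alpha\bigr)\bigl(\mu^2\|\mb{d}\|_2^2-\|\mb{e}\|_2^2\bigr)\le 0$ is right, both sign claims hold under $0<\alpha<1/l$ and $\mu\le l$, and your reading of the statement as containing $\alpha\nabla F(\mb{x})$ (a typo in the lemma as printed) is confirmed by how the lemma is invoked in the proof of Lemma~\ref{lem_ii}. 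Your remark that the naive combination of strong convexity and Lipschitz continuity only yields the factor $\sqrt{1-2\alpha\mu+\alpha^2l^2}$ correctly identifies why the coupled inequality is needed, and the averaged-Hessian alternative is fine with the caveat you already note that it needs $F\in C^2$, which is more than Assumptions~\ref{asp1}--\ref{asp2} grant.
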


\begin{lem}[\cite{matrix}]\label{rho}
Let~$X\in\mathbb{R}^{n\times n}$ be non-negative and~$\mb{x}\in\mathbb{R}^{n}$ be a positive vector. If~$X\mb{x}<\omega\mb{x}$ with~$\omega>0$, then~$\rho(X)<\omega$. 
\end{lem}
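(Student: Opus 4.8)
The plan is to reduce Lemma~\ref{rho} to the elementary fact that the spectral radius of a matrix does not exceed any of its induced matrix norms, after a diagonal similarity transformation that converts the hypothesis $X\mb{x}<\omega\mb{x}$ into a bound on row sums.

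First, since $\mb{x}$ is entrywise positive, $D\triangleq\mbox{diag}(\mb{x})$ is invertible, so $\widetilde{X}\triangleq D^{-1}XD$ is well-defined, nonnegative (because $X$ is nonnegative and $D$ has positive diagonal), and similar to $X$, whence $\rho(\widetilde{X})=\rho(X)$. Next I would compute the $i$th row sum of $\widetilde{X}$: $[\widetilde{X}\mb{1}_n]_i=\tfrac{1}{[\mb{x}]_i}\sum_{j}X_{ij}[\mb{x}]_j=\tfrac{[X\mb{x}]_i}{[\mb{x}]_i}<\omega$, where the strict inequality is exactly the hypothesis applied componentwise. Taking the maximum over the finitely many indices preserves strictness, so the maximum absolute row sum of $\widetilde{X}$ satisfies $\mn{\widetilde{X}}_\infty=\max_i[\widetilde{X}\mb{1}_n]_i<\omega$, with absolute values being unnecessary since $\widetilde{X}\geq\bfzero$. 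Finally, $\rho(X)=\rho(\widetilde{X})\leq\mn{\widetilde{X}}_\infty<\omega$, which is the claim.

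A second, equally short route uses the Perron--Frobenius theorem directly: a nonnegative $X$ admits a nonzero, entrywise-nonnegative left Perron vector $\mb{v}$ with $\mb{v}^\top X=\rho(X)\mb{v}^\top$. Multiplying the componentwise inequality $\omega\mb{x}-X\mb{x}>\bfzero$ on the left by $\mb{v}^\top\geq\bfzero$ gives $(\omega-\rho(X))\,\mb{v}^\top\mb{x}=\mb{v}^\top(\omega\mb{x}-X\mb{x})>0$, the strictness holding because $\mb{v}$ has at least one positive entry and $\mb{x}$ is positive; since $\mb{v}^\top\mb{x}>0$ for the same reason, dividing yields $\rho(X)<\omega$. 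I do not anticipate a genuine obstacle here: the only subtlety is checking that strictness survives, which it does because a maximum of finitely many reals each below $\omega$ is still below $\omega$ (first route), or because pairing a nonzero nonnegative vector with a strictly positive vector against a strictly positive vector produces a strictly positive scalar (second route). Since the statement is classical and used verbatim from~\cite{matrix}, I would present the three-line diagonal-scaling argument and, if space permits, remark on the Perron-vector alternative.
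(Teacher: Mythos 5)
Your proof is correct. Note, however, that the paper itself offers no proof of this lemma: it is quoted verbatim from the cited reference \cite{matrix} (it is a standard fact, e.g., Horn and Johnson's bound on the spectral radius via positive vectors), so there is no in-paper argument to compare against. Both of your routes are sound: the diagonal-similarity computation correctly turns the hypothesis $X\mb{x}<\omega\mb{x}$ into the strict row-sum bound $\mn{D^{-1}XD}_\infty<\omega$ and invokes $\rho(\cdot)\leq\mn{\cdot}_\infty$ with similarity-invariance of the spectrum; the Perron--Frobenius route correctly handles the only delicate point, namely that pairing the nonzero nonnegative left Perron vector with the strictly positive vector $\omega\mb{x}-X\mb{x}$ yields a strictly positive scalar, so the inequality $\rho(X)<\omega$ is strict. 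Either argument would serve as a self-contained justification of the cited result.
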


\section{Convergence analysis}\label{conv}
In this section, we analyze the~$\mc{S}$-$\mc{AB}$ algorithm and establish its convergence properties for which we present Lemmas~\ref{lem_y}-\ref{lem_iii}. The proofs for these lemmas are provided in the Appendix. First, in Lemma~\ref{lem_y}, we bound~$\|\mb{y}_k\|_2^2$. 
\begin{lem}\label{lem_y}
Let Assumptions~\ref{asp1}-\ref{asp4} hold. Then the iterates~$\{\mb{y}_k\}_{k\geq 0}$ in~\eqref{SABv} follow:
\begin{align}
\mathbb{E}\left[\left\|\mb{y}_k\right\|_2^2\big|\mc{F}_k\right] \leq&~\frac{4n\|\bds{\pi}_c\|_2^2l^2}{\ul{\bds{\pi}_r}}\left\|\mb{x}_k-\mb{1}_n\widehat{\mb{x}}_k\right\|_{\bds{\pi}_r}^2
+ 4n^2\|\bds{\pi}_c\|_2^2l^2\left\|\widehat{\mb{x}}_k-\mb{x}^*\right\|^2_2  \nonumber\\
&+ 4\ol{\bds{\pi}_c}\mathbb{E}\left[\left\|\mb{y}_k - B_\infty\mb{y}_k\right\|^2_{\bds{\pi}_c}\big|\mc{F}_k\right]
+ 4n\|\bds{\pi}_c\|_2^2\sigma^2. 
\end{align}
\end{lem}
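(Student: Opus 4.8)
The plan is to split $\mb{y}_k$ into its consensus component $B_\infty\mb{y}_k$ and the residual $\mb{y}_k-B_\infty\mb{y}_k$, then control the consensus component through the average $\ol{\mb{y}}_k$ using the stochastic-gradient estimates of Lemma~\ref{SG} and the smoothness estimate of Lemma~\ref{lsm}. For the decomposition step, since $B$ is column-stochastic we have $B_\infty=\bds{\pi}_c\mb{1}_n^\top$, hence $B_\infty\mb{y}_k=n\bds{\pi}_c\ol{\mb{y}}_k$ and $\|B_\infty\mb{y}_k\|_2=n\|\bds{\pi}_c\|_2\,|\ol{\mb{y}}_k|$. Writing $\mb{y}_k=(\mb{y}_k-B_\infty\mb{y}_k)+B_\infty\mb{y}_k$, applying $\|\mb{a}+\mb{b}\|_2^2\le 2\|\mb{a}\|_2^2+2\|\mb{b}\|_2^2$, and taking conditional expectation gives
$$\mathbb{E}\big[\|\mb{y}_k\|_2^2\big|\mc{F}_k\big]\le 2\,\mathbb{E}\big[\|\mb{y}_k-B_\infty\mb{y}_k\|_2^2\big|\mc{F}_k\big]+2n^2\|\bds{\pi}_c\|_2^2\,\mathbb{E}\big[|\ol{\mb{y}}_k|^2\big|\mc{F}_k\big].$$

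Next I bound the average term. By Lemma~\ref{SG}(2), $\mathbb{E}[\ol{\mb{y}}_k|\mc{F}_k]=\mb{h}(\mb{x}_k)$, and $\mb{h}(\mb{x}_k)$ is $\mc{F}_k$-measurable because $\mb{x}_k$ is a function of $\{\bds{\xi}_t\}_{t\le k-1}$; therefore, decomposing $\ol{\mb{y}}_k=\mb{h}(\mb{x}_k)+(\ol{\mb{y}}_k-\mb{h}(\mb{x}_k))$, the cross term vanishes under $\mathbb{E}[\cdot|\mc{F}_k]$, and Lemma~\ref{SG}(3) yields $\mathbb{E}[|\ol{\mb{y}}_k|^2|\mc{F}_k]=|\mb{h}(\mb{x}_k)|^2+\mathbb{E}[|\ol{\mb{y}}_k-\mb{h}(\mb{x}_k)|^2|\mc{F}_k]\le |\mb{h}(\mb{x}_k)|^2+\sigma^2/n$. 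To bound $|\mb{h}(\mb{x}_k)|^2$, I use $\nabla F(\mb{x}^*)=\bfzero$, the triangle inequality, Lemma~\ref{lsm}, and the $l$-smoothness of $F$ (recorded in Lemma~\ref{cvx_stan}):
$$\|\mb{h}(\mb{x}_k)\|_2\le \|\mb{h}(\mb{x}_k)-\nabla F(\widehat{\mb{x}}_k)\|_2+\|\nabla F(\widehat{\mb{x}}_k)-\nabla F(\mb{x}^*)\|_2\le \tfrac{l}{\sqrt n}\|\mb{x}_k-\mb{1}_n\widehat{\mb{x}}_k\|_2+l\|\widehat{\mb{x}}_k-\mb{x}^*\|_2,$$
so that $\|\mb{h}(\mb{x}_k)\|_2^2\le \tfrac{2l^2}{n}\|\mb{x}_k-\mb{1}_n\widehat{\mb{x}}_k\|_2^2+2l^2\|\widehat{\mb{x}}_k-\mb{x}^*\|_2^2$.

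Substituting these estimates back gives $\mathbb{E}[\|\mb{y}_k\|_2^2|\mc{F}_k]\le 2\mathbb{E}[\|\mb{y}_k-B_\infty\mb{y}_k\|_2^2|\mc{F}_k]+4n\|\bds{\pi}_c\|_2^2 l^2\|\mb{x}_k-\mb{1}_n\widehat{\mb{x}}_k\|_2^2+4n^2\|\bds{\pi}_c\|_2^2 l^2\|\widehat{\mb{x}}_k-\mb{x}^*\|_2^2+2n\|\bds{\pi}_c\|_2^2\sigma^2$, and the proof is completed by converting to the weighted norms via the equivalences stated before Lemma~\ref{A_B}, namely $\|\cdot\|_2^2\le\ul{\bds{\pi}_r}^{-1}\|\cdot\|_{\bds{\pi}_r}^2$ applied to $\mb{x}_k-\mb{1}_n\widehat{\mb{x}}_k$ and $\|\cdot\|_2^2\le\ol{\bds{\pi}_c}\|\cdot\|_{\bds{\pi}_c}^2$ applied to $\mb{y}_k-B_\infty\mb{y}_k$, and by relaxing the coefficients $2$ to $4$ for uniformity. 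The only delicate point is the cancellation of the cross term in the second step: it relies precisely on $\mathbb{E}[\ol{\mb{y}}_k|\mc{F}_k]=\mb{h}(\mb{x}_k)$ together with $\mc{F}_k$-measurability of $\mb{h}(\mb{x}_k)$ — without it one would incur an extra factor of $2$ and a $2\sigma^2/n$-type slack that would break the stated constants; the remainder is a routine chain of triangle inequalities and norm conversions.
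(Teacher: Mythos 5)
Your proof is correct and follows essentially the same route as the paper: both decompose $\mb{y}_k$ into the $B_\infty$-residual plus the averaged part, pass from $\ol{\mb{y}}_k$ to $\mb{h}(\mb{x}_k)$ via Lemma~\ref{SG}, and then to the consensus error and optimality gap via Lemma~\ref{lsm}, the $l$-smoothness of $F$, and the stated norm equivalences. The only cosmetic difference is that the paper applies a single four-term triangle inequality to $\|\mb{y}_k\|_2$ and then squares, obtaining the uniform constant $4$ without invoking the conditional orthogonality of $\ol{\mb{y}}_k-\mb{h}(\mb{x}_k)$; so your closing claim that the cross-term cancellation is indispensable to the stated constants is not quite accurate, although your use of it is valid and in fact gives slightly tighter constants on two of the terms before you relax $2$ to $4$.
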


Next in Lemmas~\ref{lem_i}-\ref{lem_iii}, we bound the following three quantities in expectation, conditioned on the $\sigma$-algebra~$\mc{F}_k$: (i)~$\|\mb{x}_{k+1}-\mb{1}_n\widehat{\mb{x}}_{k+1}\|^2_{\bds{\pi}_r}$, the consensus error in the network; (ii)~$\|\widehat{\mb{x}}_{k+1} - \mb{x}^*\|^2_2$, the optimality gap; and, (iii)~$\|\mb{y}_{k+1}-B_\infty\mb{y}_{k+1}\|_{\bds{\pi}_c}^2$, the gradient tracking error. We then show that the norm of a vector composed of these three quantities converges linearly to a ball around the optimal when the step-size~$\alpha$ is fixed and sufficiently small. The first lemma below is on the consensus error. 
\begin{lem}\label{lem_i}
Let Assumption~\ref{asp3} hold. Then the consensus error in the network follows:~$\forall k\geq0$,
\begin{align}
\mbb{E}&\left[\left\|\mb{x}_{k+1}-\mb{1}_n\widehat{\mb{x}}_{k+1}\right\|^2_{\bds{\pi}_r}\big|\mc{F}_k\right] \leq\frac{1+\sigma_A^2}{2}~\left\|\mb{x}_k -\mb{1}_n\widehat{\mb{x}}_k\right\|^2_{\bds{\pi}_r}+ \frac{2\ol{\bds{\pi}_r}\alpha^2}{1-\sigma_A^2}~\mbb{E}\left[\left\|\mb{y}_k\right\|^2_2\big|\mc{F}_k\right].
\end{align}
\end{lem}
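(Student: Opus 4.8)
The plan is to turn the $\mb{x}$-update into an exact recursion for the consensus residual and then invoke the contraction of $A$ from Lemma~\ref{A_B}. First I would left-multiply \eqref{SABv1} by $\bds{\pi}_r^\top$ and use $\bds{\pi}_r^\top A = \bds{\pi}_r^\top$ to get $\widehat{\mb{x}}_{k+1} = \widehat{\mb{x}}_k - \alpha\bds{\pi}_r^\top\mb{y}_k$; substituting this back into \eqref{SABv1}, together with $A_\infty = \mb{1}_n\bds{\pi}_r^\top$ so that $\mb{1}_n\widehat{\mb{x}}_k = A_\infty\mb{x}_k$, yields the pathwise identity
\[
\mb{x}_{k+1} - \mb{1}_n\widehat{\mb{x}}_{k+1} = \bigl(A - A_\infty\bigr)\mb{x}_k - \alpha\bigl(I_n - A_\infty\bigr)\mb{y}_k .
\]
Randomness enters only through $\mb{y}_k$, so no stochastic argument is needed until the very last step.

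Next I would apply $\|\cdot\|_{\bds{\pi}_r}$ and the triangle inequality. The first term is bounded by $\sigma_A\|\mb{x}_k - \mb{1}_n\widehat{\mb{x}}_k\|_{\bds{\pi}_r}$ using \eqref{A} and the fact that $\mb{x}_k - A_\infty\mb{x}_k = \mb{x}_k - \mb{1}_n\widehat{\mb{x}}_k$. The second term is bounded by $\alpha\,\mn{I_n - A_\infty}_{\bds{\pi}_r}\,\|\mb{y}_k\|_{\bds{\pi}_r} = \alpha\|\mb{y}_k\|_{\bds{\pi}_r}$ using the identity $\mn{I_n - A_\infty}_{\bds{\pi}_r} = 1$ recorded after Lemma~\ref{A_B}, and then $\|\mb{y}_k\|_{\bds{\pi}_r}\leq\ol{\bds{\pi}_r}^{0.5}\|\mb{y}_k\|_2$ from the norm-equivalence list in Section~\ref{aux_res}. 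This gives the scalar estimate
\[
\bigl\|\mb{x}_{k+1} - \mb{1}_n\widehat{\mb{x}}_{k+1}\bigr\|_{\bds{\pi}_r}
\leq \sigma_A\bigl\|\mb{x}_k - \mb{1}_n\widehat{\mb{x}}_k\bigr\|_{\bds{\pi}_r} + \alpha\,\ol{\bds{\pi}_r}^{0.5}\,\|\mb{y}_k\|_2 .
\]

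To obtain the squared bound I would square the above and use Young's inequality $(a+b)^2 \leq (1+\eta)a^2 + (1+\eta^{-1})b^2$ with the specific choice $\eta = \tfrac{1-\sigma_A^2}{2\sigma_A^2}>0$ (valid since $\sigma_A<1$), which makes $(1+\eta)\sigma_A^2 = \tfrac{1+\sigma_A^2}{2}$ and $1+\eta^{-1} = \tfrac{1+\sigma_A^2}{1-\sigma_A^2}\leq\tfrac{2}{1-\sigma_A^2}$. Finally I would take $\mbb{E}[\,\cdot\,|\mc{F}_k]$; since $\mb{x}_k$ and $\widehat{\mb{x}}_k$ are $\mc{F}_k$-measurable, the first term passes through unchanged, whereas $\mb{y}_k$ depends on $\bds{\xi}_0,\dots,\bds{\xi}_k$ and is therefore kept inside the conditional expectation, producing exactly the stated inequality.

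The argument is essentially bookkeeping, so I do not anticipate a genuine obstacle; the only two points needing care are (i) choosing the Young parameter so that the contraction factor lands precisely at $\tfrac{1+\sigma_A^2}{2}$, and (ii) remembering to invoke the contraction of $A$ in the weighted norm $\|\cdot\|_{\bds{\pi}_r}$ rather than in $\|\cdot\|_2$ --- in the Euclidean norm the residual $\mb{x}_k - \mb{1}_n\widehat{\mb{x}}_k$ need not shrink under $A$, which is precisely why the weighted norms of Section~\ref{aux_res} were introduced.
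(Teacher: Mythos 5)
Your proposal is correct and follows essentially the same route as the paper: the identity $\mb{x}_{k+1}-\mb{1}_n\widehat{\mb{x}}_{k+1}=(A-A_\infty)\mb{x}_k-\alpha(I_n-A_\infty)\mb{y}_k$, the contraction of Lemma~\ref{A_B} together with $\mn{I_n-A_\infty}_{\bds{\pi}_r}=1$, and a Young-type splitting calibrated so the contraction factor lands at $\tfrac{1+\sigma_A^2}{2}$ and the $\mb{y}_k$ coefficient at $\tfrac{1+\sigma_A^2}{1-\sigma_A^2}\alpha^2\ol{\bds{\pi}_r}\leq\tfrac{2\ol{\bds{\pi}_r}\alpha^2}{1-\sigma_A^2}$. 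The only cosmetic difference is that you apply the triangle inequality first and then square via $(a+b)^2\leq(1+\eta)a^2+(1+\eta^{-1})b^2$, whereas the paper expands the square and bounds the cross term; the two are algebraically equivalent.
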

The next lemma is on the optimality gap. 
\begin{lem}\label{lem_ii}
Let Assumptions~\ref{asp1}-\ref{asp4} hold. If~$0<\alpha<\tfrac{1}{n\bds{\pi}_r^\top\bds{\pi}_cl}$, the optimality gap in the network follows:~$\forall k\geq0$,
\begin{align}
\mathbb{E}\left[\left\|\widehat{\mb{x}}_{k+1} - \mb{x}^*\right\|^2_2\big|\mc{F}_k\right]\leq  
&~ \frac{3\alpha\bds{\pi}_r^\top\bds{\pi}_c l^2}{\mu\ul{\bds{\pi}_r}}\left\|\mb{x}_k-\mb{1}_n\widehat{\mb{x}}_k\right\|_{\bds{\pi}_r}^2 \nonumber\\
&+\left(1-\frac{\mu n\bds{\pi}_r^\top\bds{\pi}_c\alpha}{2}\right)\left\|\widehat{\mb{x}}_{k}-\mb{x}^*\right\|^2_2
 \nonumber\\
&+\frac{3\alpha\|\bds{\pi}_r\|_2^2\ol{\bds{\pi}_c}}{\mu n\bds{\pi}_r^\top\bds{\pi}_c }~\mathbb{E}\left[\left\|\mb{y}_k-B_\infty\mb{y}_k\right\|_{\bds{\pi}_c}^2|\mc{F}_k\right]\nonumber\\
&+\frac{3\left(\bds{\pi}_r^\top\bds{\pi}_c\right)^2n\alpha ^2\sigma^2}{2}.
\end{align}
\end{lem}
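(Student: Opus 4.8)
The plan is to track the evolution of $\widehat{\mb{x}}_k = \bds{\pi}_r^\top \mb{x}_k$ under iteration~\eqref{SABv1}. Since $\bds{\pi}_r^\top A = \bds{\pi}_r^\top$ (as $\bds{\pi}_r$ is the left Perron vector of $A$), multiplying~\eqref{SABv1} on the left by $\bds{\pi}_r^\top$ gives $\widehat{\mb{x}}_{k+1} = \widehat{\mb{x}}_k - \alpha\,\bds{\pi}_r^\top\mb{y}_k$. Next I would use the key structural fact that $\bds{\pi}_r^\top\mb{y}_k$ can be rewritten in terms of the gradient quantities: since $\mb{1}_n^\top B = \mb{1}_n^\top$, the average $\ol{\mb{y}}_k = \tfrac1n\mb{1}_n^\top\mb{y}_k$ obeys Lemma~\ref{SG}, and $\bds{\pi}_r^\top\mb{y}_k = n\,\bds{\pi}_r^\top\bds{\pi}_c\,\ol{\mb{y}}_k + \bds{\pi}_r^\top(\mb{y}_k - B_\infty\mb{y}_k)$ using $B_\infty = \bds{\pi}_c\mb{1}_n^\top$. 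This decomposes the update into a descent term driven by $\ol{\mb{y}}_k$, a gradient-tracking-error perturbation, and (through Lemma~\ref{SG}) the stochastic gradient noise.

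Then I would write $\widehat{\mb{x}}_{k+1} - \mb{x}^* = \bigl(\widehat{\mb{x}}_k - \alpha n\bds{\pi}_r^\top\bds{\pi}_c\,\nabla F(\widehat{\mb{x}}_k) - \mb{x}^*\bigr) + \text{(tracking error term)} + \text{(gradient approximation term)} + \text{(noise term)}$, where I add and subtract the ideal centralized descent step $\alpha n\bds{\pi}_r^\top\bds{\pi}_c\,\nabla F(\widehat{\mb{x}}_k)$ and also the quantity $\alpha n\bds{\pi}_r^\top\bds{\pi}_c\,\mb{h}(\mb{x}_k)$ to separate $\mb{h}(\mb{x}_k) - \nabla F(\widehat{\mb{x}}_k)$, which Lemma~\ref{lsm} bounds by the consensus error. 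Taking conditional expectation given $\mc{F}_k$, the cross term with the zero-mean noise $\ol{\mb{y}}_k - \mb{h}(\mb{x}_k)$ vanishes (by Lemma~\ref{SG}(2)), so I am left with the squared norm of a sum of three deterministic-given-$\mc{F}_k$ terms plus the variance term $\alpha^2 n^2(\bds{\pi}_r^\top\bds{\pi}_c)^2\cdot\tfrac{\sigma^2}{n}$ from Lemma~\ref{SG}(3), giving the last term $\tfrac{3(\bds{\pi}_r^\top\bds{\pi}_c)^2 n\alpha^2\sigma^2}{2}$ after the $3$-term expansion.

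To handle the main deterministic piece, I would apply Lemma~\ref{cvx_stan} with effective step-size $\alpha n\bds{\pi}_r^\top\bds{\pi}_c$ (valid precisely under the stated condition $0 < \alpha < \tfrac{1}{n\bds{\pi}_r^\top\bds{\pi}_c l}$), obtaining contraction factor $(1 - \alpha\mu n\bds{\pi}_r^\top\bds{\pi}_c)$ on $\|\widehat{\mb{x}}_k - \mb{x}^*\|_2$. The crux is the algebra of distributing the squared-norm bound $\|a+b+c\|^2 \le 3(\|a\|^2+\|b\|^2+\|c\|^2)$ and then using Young's inequality to trade the linear contraction term against the $O(\alpha)$ cross terms: one shows $(1-\alpha\mu n\bds{\pi}_r^\top\bds{\pi}_c)^2 \le 1 - \alpha\mu n\bds{\pi}_r^\top\bds{\pi}_c$ (for small $\alpha$) and, after splitting $3$-ways and absorbing constants, that the $\|\widehat{\mb{x}}_k - \mb{x}^*\|_2^2$ coefficient reduces to $1 - \tfrac{\mu n\bds{\pi}_r^\top\bds{\pi}_c\alpha}{2}$; the consensus and tracking-error terms pick up factors $\tfrac{l}{\sqrt n}$ (from Lemma~\ref{lsm}) and norm-equivalence constants ($\|\mb{x}_k - \mb{1}_n\widehat{\mb{x}}_k\|_2^2 \le \tfrac{1}{\ul{\bds{\pi}_r}}\|\mb{x}_k - \mb{1}_n\widehat{\mb{x}}_k\|_{\bds{\pi}_r}^2$ and $\|\bds{\pi}_r^\top(\mb{y}_k-B_\infty\mb{y}_k)\|_2^2 \le \|\bds{\pi}_r\|_2^2\ol{\bds{\pi}_c}\|\mb{y}_k-B_\infty\mb{y}_k\|_{\bds{\pi}_c}^2$), together with a factor $\alpha$ pulled out by a Young's-inequality split of the form $2\alpha\langle u, v\rangle \le \tfrac{\alpha\mu n\bds{\pi}_r^\top\bds{\pi}_c}{2}\|u\|^2 + \tfrac{2\alpha}{\mu n\bds{\pi}_r^\top\bds{\pi}_c}\|v\|^2$, yielding the $\tfrac{3\alpha\bds{\pi}_r^\top\bds{\pi}_c l^2}{\mu\ul{\bds{\pi}_r}}$ and $\tfrac{3\alpha\|\bds{\pi}_r\|_2^2\ol{\bds{\pi}_c}}{\mu n\bds{\pi}_r^\top\bds{\pi}_c}$ coefficients. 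The main obstacle is bookkeeping: getting exactly the stated constants requires choosing the Young's-inequality weights so that the quadratic-in-$\alpha$ terms from squaring the contraction are dominated by the linear term, which forces using $0<\alpha<\tfrac{1}{n\bds{\pi}_r^\top\bds{\pi}_c l} \le \tfrac{1}{\mu n\bds{\pi}_r^\top\bds{\pi}_c}$ (since $\mu \le l$) to ensure $1 - \alpha\mu n\bds{\pi}_r^\top\bds{\pi}_c \in (0,1)$, so every step-size-dependent coefficient stays bounded and the linear term survives at rate $1 - \tfrac{\mu n\bds{\pi}_r^\top\bds{\pi}_c\alpha}{2}$.
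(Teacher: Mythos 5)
Your proposal follows the paper's proof essentially step for step: project \eqref{SABv1} with $\bds{\pi}_r^\top$, split $\bds{\pi}_r^\top\mb{y}_k$ into the averaged part $n\bds{\pi}_r^\top\bds{\pi}_c\ol{\mb{y}}_k$ and the tracking error $\bds{\pi}_r^\top(\mb{y}_k-B_\infty\mb{y}_k)$, apply Lemma~\ref{cvx_stan} with effective step-size $\widetilde{\alpha}=\alpha n\bds{\pi}_r^\top\bds{\pi}_c$, and close with Young's inequality using the weight $\mu n\bds{\pi}_r^\top\bds{\pi}_c/2$ plus Lemmas~\ref{SG} and~\ref{lsm} and the stated norm equivalences, which is exactly the paper's $r_1$/$r_2$ bookkeeping. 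One caution on an intermediate claim: $\bds{\pi}_r^\top(\mb{y}_k-B_\infty\mb{y}_k)$ is \emph{not} deterministic given $\mc{F}_k$ (it depends on $\xi_k$), so its cross term with the noise $\ol{\mb{y}}_k-\mb{h}(\mb{x}_k)$ does not vanish under $\mathbb{E}[\,\cdot\,|\mc{F}_k]$; the paper sidesteps this by bounding that cross term pathwise via Cauchy--Schwarz and Young (its $r_2$), which is precisely the split you write in your last paragraph, so your plan still goes through.
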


\noindent Finally, we quantify the gradient tracking error. 
\begin{lem}\label{lem_iii}
Let Assumptions~\ref{asp2}-\ref{asp4} hold. The gradient tracking error follows:
\begin{align}
\mathbb{E}\left[\left\|\mb{y}_{k+1}-B_\infty\mb{y}_{k+1}\right\|_{\bds{\pi}_c}^2|\mc{F}_k\right] 
\leq&~
\frac{16l^2}{\ul{\bds{\pi}_r}\:\ul{\bds{\pi}_c}(1-\sigma_B^2)}\left\|\mb{x}_{k}-\mb{1}_n\widehat{\mb{x}}_{k}\right\|_{\bds{\pi}_r}^2
\nonumber\\
&+
\frac{1+\sigma_B^2}{2}~\mathbb{E}\left[\left\|\mb{y}_{k}-B_\infty\mb{y}_{k}\right\|_{\bds{\pi}_c}^2|\mc{F}_k\right]
\nonumber\\
&+
\frac{4l^2\alpha^2}{\ul{\bds{\pi}_c}(1-\sigma_B^2)}~\mbb{E}\left[\left\|\mb{y}_k\right\|_{2}^2 | \mc{F}_k\right]
+\frac{2nl\sigma^2\alpha}{\ul{\bds{\pi}_c}}+\frac{4n\sigma^2}{\ul{\bds{\pi}_c}}.
\end{align}
\end{lem}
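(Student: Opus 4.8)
\noindent\textbf{Proof approach for Lemma~\ref{lem_iii}.}
The plan is to project the gradient-tracking update~\eqref{SABv2} onto the subspace that is orthogonal, in the~$\bds{\pi}_c$-weighted geometry, to the Perron direction of~$B$, peel off the ``fresh'' stochastic gradient noise carrying~$\bds{\xi}_{k+1}$, and control the remaining increment through~$l$-smoothness and the contraction bound of Lemma~\ref{A_B}. Write~$\mb{g}_k\triangleq\mb{g}(\mb{x}_k,\bds{\xi}_k)$,~$\mb{e}_k\triangleq\mb{g}_k-\nabla\mb{f}(\mb{x}_k)$, and~$\mc{G}_k\triangleq\sigma(\mc{F}_k,\bds{\xi}_k)$. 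Using~$B_\infty B=B_\infty$,~$B_\infty^2=B_\infty$, and~$B_\infty(\mb{y}_k-B_\infty\mb{y}_k)=\mb{0}$, \eqref{SABv2} gives~$\mb{y}_{k+1}-B_\infty\mb{y}_{k+1}=(I_n-B_\infty)B(\mb{y}_k-B_\infty\mb{y}_k)+(I_n-B_\infty)(\mb{g}_{k+1}-\mb{g}_k)$. Splitting~$\mb{g}_{k+1}-\mb{g}_k=\mb{e}_{k+1}+(\nabla\mb{f}(\mb{x}_{k+1})-\nabla\mb{f}(\mb{x}_k))-\mb{e}_k$ and lumping everything except~$\mb{e}_{k+1}$ into a~$\mc{G}_k$-measurable vector~$\mb{p}_k$, one obtains~$\mb{y}_{k+1}-B_\infty\mb{y}_{k+1}=\mb{p}_k+(I_n-B_\infty)\mb{e}_{k+1}$. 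Since~$\mathbb{E}[\mb{e}_{k+1}\big|\mc{G}_k]=\mb{0}$ (Assumption~\ref{asp4}), the cross term drops after conditioning on~$\mc{F}_k$, and the~$\mb{e}_{k+1}$ contribution is bounded, via~$\mn{I_n-B_\infty}_{\bds{\pi}_c}=1$ and~$\|\cdot\|_{\bds{\pi}_c}\le\ul{\bds{\pi}_c}^{-1/2}\|\cdot\|_2$, by~$\ul{\bds{\pi}_c}^{-1}\mathbb{E}[\|\mb{e}_{k+1}\|_2^2\big|\mc{F}_k]\le n\sigma^2/\ul{\bds{\pi}_c}$ --- crucially with no~$(1-\sigma_B^2)^{-1}$ inflation.

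For~$\mb{p}_k$ I would use Lemma~\ref{A_B} with~$\mb{z}=\mb{y}_k-B_\infty\mb{y}_k$ and~$B_\infty\mb{z}=\mb{0}$, so~$\|(I_n-B_\infty)B\mb{z}\|_{\bds{\pi}_c}=\|B\mb{z}-B_\infty\mb{z}\|_{\bds{\pi}_c}\le\sigma_B\|\mb{z}\|_{\bds{\pi}_c}$, while~$\mn{I_n-B_\infty}_{\bds{\pi}_c}=1$ controls the rest. Applying Young's inequality~$\|\mb{a}+\mb{b}\|^2\le(1+\eta)\|\mb{a}\|^2+(1+\eta^{-1})\|\mb{b}\|^2$ with~$(1+\eta)\sigma_B^2=\tfrac{1+\sigma_B^2}{2}$ produces the contraction factor~$\tfrac{1+\sigma_B^2}{2}$ on~$\mathbb{E}[\|\mb{y}_k-B_\infty\mb{y}_k\|_{\bds{\pi}_c}^2\big|\mc{F}_k]$ and leaves~$1+\eta^{-1}=\tfrac{1+\sigma_B^2}{1-\sigma_B^2}\le\tfrac{2}{1-\sigma_B^2}$ on the~$\nabla\mb{f}(\mb{x}_{k+1})-\nabla\mb{f}(\mb{x}_k)$ term (the~$\mb{e}_k$ pieces being handled separately); passing to the~$2$-norm costs~$\ul{\bds{\pi}_c}^{-1}$, and~$l$-smoothness (Assumption~\ref{asp2}) gives~$\|\nabla\mb{f}(\mb{x}_{k+1})-\nabla\mb{f}(\mb{x}_k)\|_2\le l\|\mb{x}_{k+1}-\mb{x}_k\|_2$. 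To bound the displacement I use~\eqref{SABv1} and~$(A-I_n)\mb{1}_n=\mb{0}$ to write~$\mb{x}_{k+1}-\mb{x}_k=(A-I_n)(\mb{x}_k-\mb{1}_n\widehat{\mb{x}}_k)-\alpha\mb{y}_k$; since~$A_\infty(\mb{x}_k-\mb{1}_n\widehat{\mb{x}}_k)=\mb{0}$, Lemma~\ref{A_B} gives~$\|(A-I_n)(\mb{x}_k-\mb{1}_n\widehat{\mb{x}}_k)\|_{\bds{\pi}_r}\le(1+\sigma_A)\|\mb{x}_k-\mb{1}_n\widehat{\mb{x}}_k\|_{\bds{\pi}_r}\le 2\|\mb{x}_k-\mb{1}_n\widehat{\mb{x}}_k\|_{\bds{\pi}_r}$, hence~$\|\mb{x}_{k+1}-\mb{x}_k\|_2^2\le 8\ul{\bds{\pi}_r}^{-1}\|\mb{x}_k-\mb{1}_n\widehat{\mb{x}}_k\|_{\bds{\pi}_r}^2+2\alpha^2\|\mb{y}_k\|_2^2$; chaining this through the~$l$-smoothness estimate, the factor~$\tfrac{2}{1-\sigma_B^2}$, and~$\ul{\bds{\pi}_c}^{-1}$ reproduces exactly the~$\tfrac{16l^2}{\ul{\bds{\pi}_r}\,\ul{\bds{\pi}_c}(1-\sigma_B^2)}\|\mb{x}_k-\mb{1}_n\widehat{\mb{x}}_k\|_{\bds{\pi}_r}^2$ and~$\tfrac{4l^2\alpha^2}{\ul{\bds{\pi}_c}(1-\sigma_B^2)}\mathbb{E}[\|\mb{y}_k\|_2^2\big|\mc{F}_k]$ terms.

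The main obstacle is the stochastic bookkeeping inside~$\mathbb{E}[\|\mb{p}_k\|_{\bds{\pi}_c}^2\big|\mc{F}_k]$: one must distinguish the~$\mc{F}_k$-measurable quantities ($\mb{x}_k$,~$\nabla\mb{f}(\mb{x}_k)$,~$\mb{y}_{k-1}$,~$\mb{g}_{k-1}$) from those that also depend on~$\bds{\xi}_k$ ($\mb{y}_k$,~$\mb{x}_{k+1}$,~$\nabla\mb{f}(\mb{x}_{k+1})$), so that only the~$\bds{\xi}_{k+1}$-bearing cross terms vanish, whereas the cross term between~$\nabla\mb{f}(\mb{x}_{k+1})-\nabla\mb{f}(\mb{x}_k)$ and~$\mb{e}_k$ does \emph{not}. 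The trick that keeps it harmless is to write~$\mb{x}_{k+1}=\mb{v}_k-\alpha\mb{e}_k$ with~$\mb{v}_k\triangleq A\mb{x}_k-\alpha B\mb{y}_{k-1}-\alpha\nabla\mb{f}(\mb{x}_k)+\alpha\mb{g}_{k-1}$ being~$\mc{F}_k$-measurable; then~$l$-smoothness gives~$\|\nabla\mb{f}(\mb{x}_{k+1})-\nabla\mb{f}(\mb{v}_k)\|_2\le l\alpha\|\mb{e}_k\|_2$, so after using~$\mathbb{E}[\mb{e}_k\big|\mc{F}_k]=\mb{0}$ to discard the~$\mc{F}_k$-frozen part~$\nabla\mb{f}(\mb{v}_k)-\nabla\mb{f}(\mb{x}_k)$, the conditional mean of this cross term is of order~$l\alpha\,\mathbb{E}[\|\mb{e}_k\|_2^2\big|\mc{F}_k]\le l\alpha n\sigma^2$ --- the sole source of the~$\tfrac{2nl\sigma^2\alpha}{\ul{\bds{\pi}_c}}$ term. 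The remaining appearances of~$\|\mb{e}_k\|_2^2$ and~$\|\mb{e}_{k+1}\|_2^2$, none of which pass through the~$(1-\sigma_B^2)^{-1}$ Young factor, each contribute order~$n\sigma^2/\ul{\bds{\pi}_c}$ and accumulate to the~$\tfrac{4n\sigma^2}{\ul{\bds{\pi}_c}}$ term. Collecting all contributions, using~$1+\sigma_B^2\le 2$ to clean the constants and~$\alpha^2\le\alpha$ for~$\alpha$ small to absorb the leftover~$O(\alpha)$-multiples, one reads off the stated inequality.
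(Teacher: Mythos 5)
Your overall route is the same as the paper's: the same decomposition of the update through~$(I_n-B_\infty)$, the same contraction of~$B$ from Lemma~\ref{A_B} with the same Young parameter~$\eta=\tfrac{1-\sigma_B^2}{2\sigma_B^2}$, the same displacement bound~$\|\mb{x}_{k+1}-\mb{x}_k\|_2^2\le 8\ul{\bds{\pi}_r}^{-1}\|\mb{x}_k-\mb{1}_n\widehat{\mb{x}}_k\|_{\bds{\pi}_r}^2+2\alpha^2\|\mb{y}_k\|_2^2$, and the same~$\mb{v}_k$ device (the paper's~$\widetilde{\nabla}_{k+1}^i$) for the~$O(\alpha)$ cross term. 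However, there is one genuine gap in the stochastic bookkeeping. When you expand~$\mathbb{E}\bigl[\|\mb{p}_k\|_{\bds{\pi}_c}^2\,\big|\,\mc{F}_k\bigr]$, a cross term
\begin{equation*}
-2\,\mathbb{E}\Bigl[\bigl\langle B\mb{y}_k-B_\infty\mb{y}_k,\;\mb{e}_k\bigr\rangle_{\bds{\pi}_c}\,\Big|\,\mc{F}_k\Bigr]
\end{equation*}
appears, and it does \emph{not} vanish, because~$\mb{y}_k=B\mb{y}_{k-1}+\mb{g}_k-\mb{g}_{k-1}$ carries~$\bds{\xi}_k$. Your proposal accounts only for the squared appearances of~$\mb{e}_k$ and~$\mb{e}_{k+1}$ and for the cross term of~$\mb{e}_k$ with~$\nabla\mb{f}(\mb{x}_{k+1})-\nabla\mb{f}(\mb{x}_k)$; this remaining cross term is never addressed. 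It cannot be dismissed by Cauchy--Schwarz plus Young: any split~$2\langle a,c\rangle\le\epsilon\|a\|^2+\epsilon^{-1}\|c\|^2$ either adds~$\epsilon\sigma_B^2$ on top of the already-saturated contraction factor~$\tfrac{1+\sigma_B^2}{2}$ or, if you re-budget the Young parameters, pushes a~$(1-\sigma_B^2)^{-1}$ onto the~$n\sigma^2$ term --- contradicting your own (correct) observation that the stated bound has no such inflation on the variance terms.

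The paper closes exactly this hole with its~$s_2$ argument: unroll one more step,~$B\mb{y}_k=B^2\mb{y}_{k-1}+B(\mb{g}_k-\mb{g}_{k-1})$, so that the only~$\bds{\xi}_k$-bearing piece is~$B\mb{g}_k$; invoke the cross-agent independence in Assumption~\ref{asp4} so that only the diagonal entries~$b_{ii}$ survive in expectation, where~$\mathbb{E}\bigl[\langle\mb{g}_k^i,\nabla_k^i-\mb{g}_k^i\rangle\,\big|\,\mc{F}_k\bigr]=-\mathbb{E}\bigl[\|\mb{e}_k^i\|_2^2\,\big|\,\mc{F}_k\bigr]\le0$; and bound the~$B_\infty\mb{y}_k$ part by~$n\sigma^2$ via the same diagonalization. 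This yields~$s_2\le n\sigma^2$ with no loss in the contraction factor and no~$(1-\sigma_B^2)^{-1}$, which is what makes the~$\tfrac{4n\sigma^2}{\ul{\bds{\pi}_c}}$ constant attainable. You need to supply this step (or an equivalent sign/independence argument) for the proof to go through with the stated constants.
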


\vspace{0.5cm}
With the help of the above lemmas, we define a vector,~$\mb{t}_k\in\mathbb{R}^3$, i.e.,
\begin{align*}
\mb{t}_k = \left[
\begin{array}{c}
\mathbb{E}\left[\|\mb{x}_{k}-\mb{1}_n\widehat{\mb{x}}_{k}\|^2_{\bds{\pi}_r}\right]\\
\mathbb{E}\left[\|\widehat{\mb{x}}_{k} - \mb{x}^*\|^2_2\right]\\
\mathbb{E}\left[\|\mb{y}_{k}-B_\infty\mb{y}_{k}\|_{\bds{\pi}_c}^2\right].
\end{array}
\right]
\end{align*}
By substituting the bound on~$\mbb{E}[\|\mb{y}_k\|_2^2|\mc{F}_k]$ from Lemma~\ref{lem_y} in Lemmas~\ref{lem_i}-\ref{lem_iii}, and taking the full expectation of both sides, it can be verified that~$\mb{t}_k$ follows the dynamical system below.
\begin{align}
\mb{t}_{k+1} \leq& \left[
\begin{array}{ccc}
\tfrac{1+\sigma_A^2}{2}+a_1\alpha^2&a_2\alpha^2&a_3\alpha^2\\
a_4\alpha&1-a_5\alpha&a_6\alpha\\
a_7+a_8\alpha^2&a_9\alpha^2&\frac{1+\sigma_B^2}{2}+a_{10}\alpha^2
\end{array}\right]\mb{t}_k
+ \left[
\begin{array}{c}
b_1\alpha^2\\
b_2\alpha^2\\
b_3+b_4\alpha+b_5\alpha^2
\end{array}\right],\nonumber\\
\triangleq&~G_\alpha\mb{t}_k + \mb{b}_\alpha,
\end{align}
where the constants are given by
\begin{align*}
\begin{array}{ll}
a_1 = \tfrac{8\ol{\bds{\pi}_r}n\|\bds{\pi}_c\|_2^2l^2}{\ul{\bds{\pi}_r}\left(1-\sigma_A^2\right)}, \qquad&a_8 = 
\tfrac{16nl^4\|\bds{\pi_c}\|_2^2}{\ul{\bds{\pi_r}}\:\ul{\bds{\pi_c}}(1-\sigma_B^2)}, \\
a_2 = \tfrac{8\ol{\bds{\pi}_r}n^2\|\bds{\pi}_c\|_2^2l^2}{1-\sigma_A^2},\qquad&a_9 = \tfrac{16n^2l^4\|\bds{\pi_c}\|_2^2}{\ul{\bds{\pi_c}}(1-\sigma_B^2)},\\
a_3 = \tfrac{8\ol{\bds{\pi_r}}\:\ol{\bds{\pi_c}}}{1-\sigma_A^2},\qquad&a_{10}=\tfrac{16\ol{\bds{\pi_c}}l^2}{\ul{\bds{\pi_c}}\left(1-\sigma_B^2\right)},\\
a_4 = \frac{3\bds{\pi}_r^\top\bds{\pi}_c l^2}{\mu\ul{\bds{\pi}_r}},\qquad&b_1 = \tfrac{8\ol{\bds{\pi}_r}n\|\bds{\pi}_c\|_2^2\sigma^2}{1-\sigma_A^2},\\
a_5 = \frac{\mu n\bds{\pi}_r^\top\bds{\pi}_c}{2},\qquad&b_2 = \frac{3\left(\bds{\pi}_r^\top\bds{\pi}_c\right)^2n\sigma^2}{2},\\
a_6 = \frac{3\|\bds{\pi}_r\|_2^2\ol{\bds{\pi}_c}}{\mu n\bds{\pi}_r^\top\bds{\pi}_c},\qquad&b_3 =  \tfrac{4n\sigma^2}{\ul{\bds{\pi}_c}},\\
a_7 =\tfrac{16l^2}{\ul{\bds{\pi}_c}\:\ul{\bds{\pi}_r}(1-\sigma_B^2)},\qquad&b_4 = 
\tfrac{2nl\sigma^2}{\ul{\bds{\pi}_c}}, \\
b_5 =  
\tfrac{16nl^2\|\bds{\pi_c}\|_2^2\sigma^2}{\ul{\bds{\pi_r}}\:\ul{\bds{\pi_c}}(1-\sigma_B^2)}.
\end{array}
\end{align*}

\section{Main Result}\label{main}
In this section, we analyze the inequality on~$\mb{t}_k$ to establish the convergence of~$\mc{S}$-$\mc{AB}$.

\begin{theorem}\label{thm1}
Consider the~$\mc{S}$-$\mc{AB}$ algorithm in \eqref{SAB} and let Assumptions \ref{asp1}-\ref{asp4} hold. Suppose the step-size~$\alpha$ satisfies the following the condition:
\begin{align*}
0<\alpha
<\min\left\{\frac{1}{ln\bds{\pi}_r^\top\bds{\pi}_c},
 \frac{\left(1-\sigma_A^2\right)\sqrt{\left(1-\sigma_B^2\right)}\left(\bds{\pi}_r^\top\bds{\pi}_c\right)}{l\kappa\left\|\bds{\pi}_r\right\|_2\left\|\bds{\pi}_c\right\|_2\sqrt{384h_rh_c(n\left\|\bds{\pi}_c\right\|_2^2+64)}}
,
\frac{\left(1-\sigma_B^2\right)\left(\bds{\pi}_r^\top\bds{\pi}_c\right)}{l\kappa\left\|\bds{\pi}_r\right\|_2\left\|\bds{\pi}_c\right\|_2\sqrt{24\left(n\left\|\bds{\pi}_c\right\|_2^2+48h_c\right)}}
\right\},
\end{align*}
where~$h_r = \ol{\bds{\pi}_r}/\ul{\bds{\pi}_r}$,~$h_c = \ol{\bds{\pi}_c}/\ul{\bds{\pi}_c}$ and~$\kappa = l/\mu$. 
Then,  $\rho(G_\alpha)<1$, the vector $(I_3-G_\alpha)^{-1}\mb{b}_\alpha$ has non-negative components, and we have that
\begin{align*}
&\lim\sup_{k\rightarrow\infty}\mb{t}_k \leq (I_3-G_\alpha)^{-1}\mb{b}_\alpha \\
&\lim\sup_{k\rightarrow\infty}\mathbb{E}\left[\|\mb{x}_{k}-\mb{1}_n\widehat{\mb{x}}_{k}\|^2_{\bds{\pi}_r}\right]\leq \left[(I_3-G_\alpha)^{-1}\mb{b}_\alpha\right]_1\\
&\lim\sup_{k\rightarrow\infty}\mathbb{E}\left[\|\widehat{\mb{x}}_{k} - \mb{x}^*\|^2_2\right] \leq \left[(I_3-G_\alpha)^{-1}\mb{b}_\alpha\right]_2,
\end{align*}where the above convergence is geometric with exponent~$\rho(G_\alpha)$.
\end{theorem}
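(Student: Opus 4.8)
The engine of the argument is Lemma~\ref{rho}: since $G_\alpha$ is entrywise non-negative, it suffices to exhibit a strictly positive vector $\bds{\delta}=[\delta_1,\delta_2,\delta_3]^\top$ with $G_\alpha\bds{\delta}<\bds{\delta}$, which then forces $\rho(G_\alpha)<1$. Writing $G_\alpha\bds{\delta}<\bds{\delta}$ row by row, cancelling the diagonal entries (which are strictly below $1$ because $\sigma_A<1$, $\sigma_B<1$, $a_5\alpha>0$ by Lemma~\ref{A_B}), and dividing the middle row by $\alpha>0$, this is equivalent to the three scalar conditions
\begin{align*}
&\alpha^2\big(a_1\delta_1+a_2\delta_2+a_3\delta_3\big)<\tfrac{1-\sigma_A^2}{2}\,\delta_1,\\
&a_4\delta_1+a_6\delta_3<a_5\delta_2,\\
&a_7\delta_1+\alpha^2\big(a_8\delta_1+a_9\delta_2+a_{10}\delta_3\big)<\tfrac{1-\sigma_B^2}{2}\,\delta_3.
\end{align*}

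The structural difficulty is the entry $a_7=\tfrac{16l^2}{\ul{\bds{\pi}_c}\,\ul{\bds{\pi}_r}(1-\sigma_B^2)}$ in the $(3,1)$ position of $G_\alpha$: it is the only off-diagonal coupling with no factor of $\alpha$, encoding the fact that the consensus error perturbs the gradient-tracking error even for an arbitrarily small step. Consequently $\delta_1$ must be chosen small relative to $\delta_3$ so that $a_7\delta_1$ uses up only, say, half the slack $\tfrac{1-\sigma_B^2}{2}\delta_3$ of the third row. The plan is therefore: normalize $\delta_3=1$; set $\delta_1=\tfrac{(1-\sigma_B^2)^2\,\ul{\bds{\pi}_r}\,\ul{\bds{\pi}_c}}{64\,l^2}$ so that $a_7\delta_1=\tfrac{1-\sigma_B^2}{4}$; and take $\delta_2$ large enough (e.g.\ $\delta_2=\tfrac{2(a_4\delta_1+a_6)}{a_5}$) that the middle inequality holds with a factor-of-two margin. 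With $\bds{\delta}$ pinned down this way, the first and third inequalities reduce to upper bounds on $\alpha^2$ — the residuals $\alpha^2(a_1\delta_1+a_2\delta_2+a_3)$ and $\alpha^2(a_8\delta_1+a_9\delta_2+a_{10})$ must fall below $\tfrac{1-\sigma_A^2}{2}\delta_1$ and $\tfrac{1-\sigma_B^2}{4}$ respectively. Substituting the explicit formulas for $a_1,\dots,a_{10}$ together with $\ol{\bds{\pi}_r}/\ul{\bds{\pi}_r}=h_r$, $\ol{\bds{\pi}_c}/\ul{\bds{\pi}_c}=h_c$, $l/\mu=\kappa$ and simplifying, these two thresholds collapse to exactly the second and third terms in the minimum defining the admissible range of $\alpha$; the first term $\tfrac{1}{ln\bds{\pi}_r^\top\bds{\pi}_c}$ is simply inherited as the hypothesis of Lemma~\ref{lem_ii}. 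I expect this constant bookkeeping — making the stated bounds come out exactly, rather than merely asserting existence of a small enough $\alpha$ — to be the main obstacle; it is mechanical once $\bds{\delta}$ is fixed, but requires careful use of the norm-equivalence relations listed before Lemma~\ref{A_B}.

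Granted $\rho(G_\alpha)<1$, the rest is routine. Because $G_\alpha\geq0$ and $\rho(G_\alpha)<1$, the Neumann series $(I_3-G_\alpha)^{-1}=\sum_{j\geq0}G_\alpha^{\,j}$ converges and is entrywise non-negative; since every $b_i\geq0$ and $\alpha>0$ give $\mb{b}_\alpha\geq\bfzero$, the vector $(I_3-G_\alpha)^{-1}\mb{b}_\alpha$ has non-negative components. Finally, unrolling $\mb{t}_{k+1}\leq G_\alpha\mb{t}_k+\mb{b}_\alpha$ (the inequality is entrywise and preserved by the non-negative map $\mb{v}\mapsto G_\alpha\mb{v}$) gives $\mb{t}_k\leq G_\alpha^{\,k}\mb{t}_0+\sum_{j=0}^{k-1}G_\alpha^{\,j}\mb{b}_\alpha$. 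With $\mb{t}^\star\triangleq(I_3-G_\alpha)^{-1}\mb{b}_\alpha$ one has $\mb{t}_{k+1}-\mb{t}^\star\leq G_\alpha(\mb{t}_k-\mb{t}^\star)$, hence $\mb{t}_k-\mb{t}^\star\leq G_\alpha^{\,k}(\mb{t}_0-\mb{t}^\star)\ra\bfzero$ because $\rho(G_\alpha)<1$; this yields $\limsup_{k\ra\infty}\mb{t}_k\leq\mb{t}^\star$ at a geometric rate with exponent $\rho(G_\alpha)$, and the bounds on the consensus error and the optimality gap follow by reading off the first and second coordinates. (Finiteness of $\mb{t}_0$ is guaranteed by Assumptions~\ref{asp2} and~\ref{asp4}, which bound the second moment of $\mb{y}_0=\mb{g}(\mb{x}_0,\bds{\xi}_0)$.)
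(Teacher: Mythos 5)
Your proposal follows essentially the same route as the paper: the same application of Lemma~\ref{rho}, the same three row-wise inequalities, and a test vector $\bds{\delta}$ that is exactly a positive scalar multiple of the paper's choice (the paper normalizes $\delta_1=1$ and takes $\delta_3=\tfrac{4a_7}{1-\sigma_B^2}$, which is your $\bds{\delta}$ rescaled so that $a_7\delta_1$ consumes half the slack $\tfrac{1-\sigma_B^2}{2}\delta_3$), so the resulting step-size thresholds are identical; the concluding Neumann-series/geometric-convergence argument matches what the paper leaves implicit. The only part you defer — verifying that $a_1\delta_1+a_2\delta_2+a_3\delta_3$ and $a_8\delta_1+a_9\delta_2+a_{10}\delta_3$ are dominated by the stated closed-form expressions in $h_r,h_c,\kappa$ — is likewise asserted without detail in the paper ("one can further verify"), so the proposal is correct and complete at the same level of rigor.
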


\begin{proof}
The goal is to find the range of~$\alpha$ such that~$\rho(G_\alpha)<1$. In the light of Lemma~\ref{rho}, it suffices to solve for the range of~$\alpha$ such that~$G_{\alpha}\bds{\delta}<\bds{\delta}$ holds for some positive vector~$\bds{\delta}=[\delta_1,\delta_2,\delta_3]^\top$. We now expand this element-wise matrix inequality as follows:
\begin{align*}
\left(\tfrac{1+\sigma_A^2}{2}+a_1\alpha^2\right)\delta_1+a_2\alpha^2\delta_2+a_3\alpha^2\delta_3&<\delta_1  \\
a_4\alpha\delta_1+(1-a_5\alpha)\delta_2+a_6\alpha\delta_3&<\delta_2 \\
\left(a_7+a_8\alpha^2\right)\delta_1+a_9\alpha^2\delta_2+\left(\tfrac{1+\sigma_B^2}{2}+a_{10}\alpha^2\right)\delta_3&<\delta_3
\end{align*}
which can be reformulated as:
\begin{align}
\left(a_1\delta_1+a_2\delta_2+a_3\delta_3\right)\alpha^2&<\tfrac{1-\sigma_A^2}{2}\delta_1 \label{i1}\\
a_4\delta_1\alpha-a_5\delta_2\alpha+a_6\delta_3\alpha&<0
 \label{i2}\\
\left(a_8\delta_1+a_9\delta_2+a_{10}\delta_3\right)\alpha^2&<\tfrac{1-\sigma_B^2}{2}\delta_3-a_7\delta_1 \label{i3}.
\end{align}
We first note that in order for~\eqref{i2} to hold,~$\delta_1,\delta_2,\delta_3$ suffice to satisfy:
\begin{equation}
\delta_2 > \frac{a_4\delta_1+a_6\delta_3}{a_5} \label{d2}
\end{equation}
Next, for the right handside of~\eqref{i3} to be positive,~$\delta_1,\delta_3$ suffice to satisfy:
\begin{equation}
0<\delta_1 < \frac{1-\sigma_B^2}{2a_7}\delta_3 \label{d1}
\end{equation}
In order to obtain an explicit upper bound on the step-size,~$\ol\alpha$, such that~\eqref{i1}-\eqref{i3} hold when~$0<\alpha<\ol\alpha$, we set~$\delta_1,\delta_2,\delta_3$ as the following which satisfies~\eqref{d2} and~\eqref{d1}:
\begin{align}
\delta_1 = 1, \delta_2 = \frac{2}{a_5}\left(a_4+\frac{4a_6a_7}{1-\sigma_B^2}\right), \delta_3 = \frac{4a_7}{1-\sigma_B^2}.  \label{d_val} 
\end{align}
Then we plug the values of~$\delta_1,\delta_2,\delta_3$ in~\eqref{d_val} into~\eqref{i1} and~\eqref{i2} to solve an upper bound on the step-size,~$\alpha$. From~\eqref{i1}, we get:
\begin{align}
\left(a_1+\frac{2a_2}{a_5}\left(a_4+\frac{4a_6a_7}{1-\sigma_B^2}\right)+\frac{4a_3a_7}{1-\sigma_B^2}\right)\alpha^2 <
\frac{1-\sigma_A^2}{2} \label{a1_0}
\end{align}
One can further verify that:
\begin{align*}
a_1+\frac{2a_2}{a_5}\left(a_4+\frac{4a_6a_7}{1-\sigma_B^2}\right)+\frac{4a_3a_7}{1-\sigma_B^2}
< \frac{192h_rh_c\left\|\bds{\pi}_r\right\|_2^2\left\|\bds{\pi}_c\right\|_2^2l^2\kappa^2(64+n\left\|\bds{\pi}_c\right\|_2^2)}{\left(1-\sigma_A^2\right)\left(1-\sigma_B^2\right)\left(\bds{\pi}_r^\top\bds{\pi}_c\right)^2}
\end{align*}
We use the above inequality in~\eqref{a1_0} to obtain an upper bound on~$\alpha$:
\begin{align}
\alpha < \frac{\left(1-\sigma_A^2\right)\left(\bds{\pi}_r^\top\bds{\pi}_c\right)}{l\kappa\left\|\bds{\pi}_r\right\|_2\left\|\bds{\pi}_c\right\|_2}\sqrt{
\frac{\left(1-\sigma_B^2\right)}{384h_rh_c(64+n\left\|\bds{\pi}_c\right\|_2^2)}
}
\label{a1}
\end{align}
Similarly, from~\eqref{i3}, we get:
\begin{align}
\left(
\frac{a_8}{a_7} + \frac{2a_9}{a_5a_7}\left(a_4+\frac{4a_6a_7}{1-\sigma_B^2}\right) + \frac{4a_{10}}{1-\sigma_B^2}
\right)\alpha^2 < 1 \label{a2_0}
\end{align}
One can then verify that:
\begin{align*}
\frac{a_8}{a_7} + \frac{2a_9}{a_5a_7}\left(a_4+\frac{4a_6a_7}{1-\sigma_B^2}\right) + \frac{4a_{10}}{1-\sigma_B^2}
<
\frac{24l^2\kappa^2\left\|\bds{\pi}_r\right\|_2^2\left\|\bds{\pi}_c\right\|_2^2\left(n\left\|\bds{\pi}_c\right\|_2^2+48h_c\right)}{\left(1-\sigma_B^2\right)^2\left(\bds{\pi}_r^\top\bds{\pi}_c\right)^2}
\end{align*}
Using the above inequality in~\eqref{a2_0} obtains another upper bound on~$\alpha$:
\begin{align}
\alpha < \frac{\left(1-\sigma_B^2\right)\left(\bds{\pi}_r^\top\bds{\pi}_c\right)}{l\kappa\left\|\bds{\pi}_r\right\|_2\left\|\bds{\pi}_c\right\|_2\sqrt{24\left(n\left\|\bds{\pi}_c\right\|_2^2+48h_c\right)}}
\label{a2}
\end{align}
We complete the proof by combining~\eqref{a1},~\eqref{a2} and the requirement from Lemma~\ref{lem_ii} that~$0<\alpha<\frac{1}{n\bds{\pi}_r^\top\bds{\pi}_cl}$.
\end{proof}
It is important to note that the error bounds in Theorem~\ref{thm1} go to zero as the step-size gets smaller and the variance on the gradient noise decreases.

\section{Numerical Experiments}\label{sims}
In this section, we illustrate the~$\mc{S}$-$\mc{AB}$ algorithm and its convergence properties. We demonstrate the results on a directed graph generated using nearest neighbor rules with~$n=20$ agents. The particular graph for the experiments is shown in Fig.~\ref{gr1} (left) to provide a sense of connectivity. We choose a logistic regression problem to classify around~$12,000$ images of two digits,~$7$ and~$8$, labeled as~$y_{ij}=+1$ or~$-1$,  from the MNIST dataset~\cite{mnist}. Each image,~$\mb{c}_{ij}$, is a~$785$-dimensional vector and the total images are divided among the agents such that each agent has~$m_i=600$ images. Because privacy and communication restrictions, the agents do not share their local batches (local training images) with each other. In order to use the entire data set for training, the network of agents cooperatively solves the following distributed logistic regression problem:
{\begin{align*}
\underset{\mb{w},b}{\operatorname{min}}~F(\mb{w},b)
=\sum_{i=1}^n\sum_{j=1}^{m_i}\ln\left[1+e^{-\left(\mb{w}^\top\mb{c}_{ij}+b\right)y_{ij}}\right]+\frac{\lambda}{2}\|\mb{w}\|_2^2\nonumber,
\end{align*}}where the private function at each agent,~$i$, is given by:
{\[
f_i(\mb{w},b)=\sum_{j=1}^{m_i}\ln\left[1+e^{-\left(\mb{w}^\top\mb{c}_{ij}+b\right)y_{ij}}\right] \nonumber\\
+\frac{\lambda}{2n}\|\mb{w}\|_2^2.
\]}

We show the performance of this classification problem over centralized and distributed methods. Centralized gradient descent (CGD) uses the entire batch, i.e., it computes~$12,000$ gradients at each iteration, whereas centralized stochastic gradient descent (C-SGD) uses only one data point at each iteration that is uniformly sampled from the entire batch. For the distributed algorithms, we show the performance of non-stochastic~$\mc{AB}$, where each agent uses its entire local batch, i.e.,~$600$ labeled data points. Whereas, for the implementation of~$\mc{S}$-$\mc{AB}$, each agent uniformly chooses one data point from its local batch. For testing, we use~$2000$ additional images that were not used for training. The residuals are shown in Fig.~\ref{gr1} (right) while the training and testing accuracy is shown in Fig.~\ref{comp1}. In the performance figures, the horizontal axis represents the number of epochs where each epoch represents computations on the entire batch. Clearly,~$\mc{S}$-$\mc{AB}$ has a better performance when compared to~$\mc{AB}$ in~\cite{AB} as expected from the performance of their centralized counterparts,~C-SGD and CGD.

\begin{figure}[!h]
\centering
\includegraphics[width=2in]{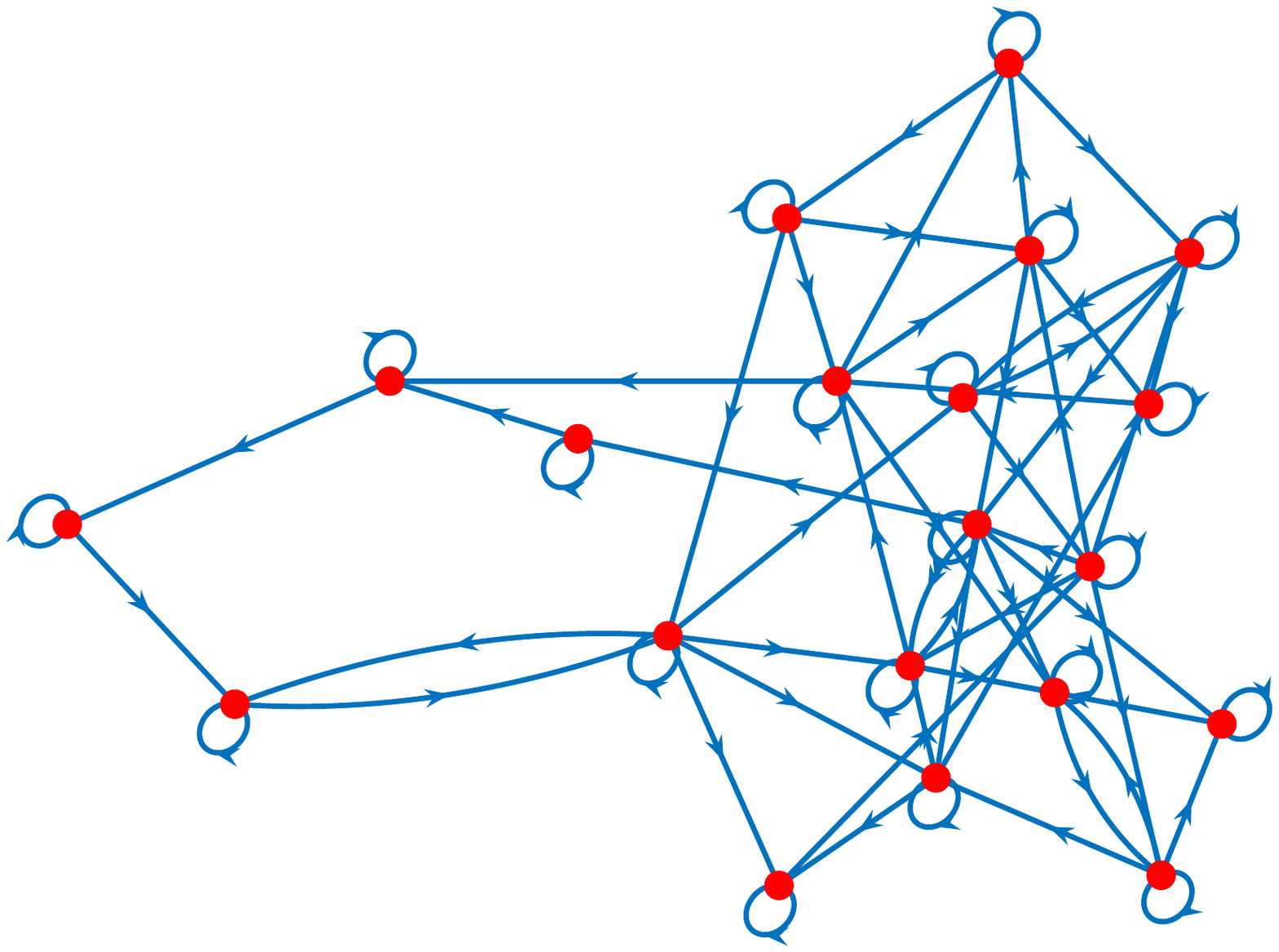}
\hspace{1cm}
\includegraphics[width=2.5in]{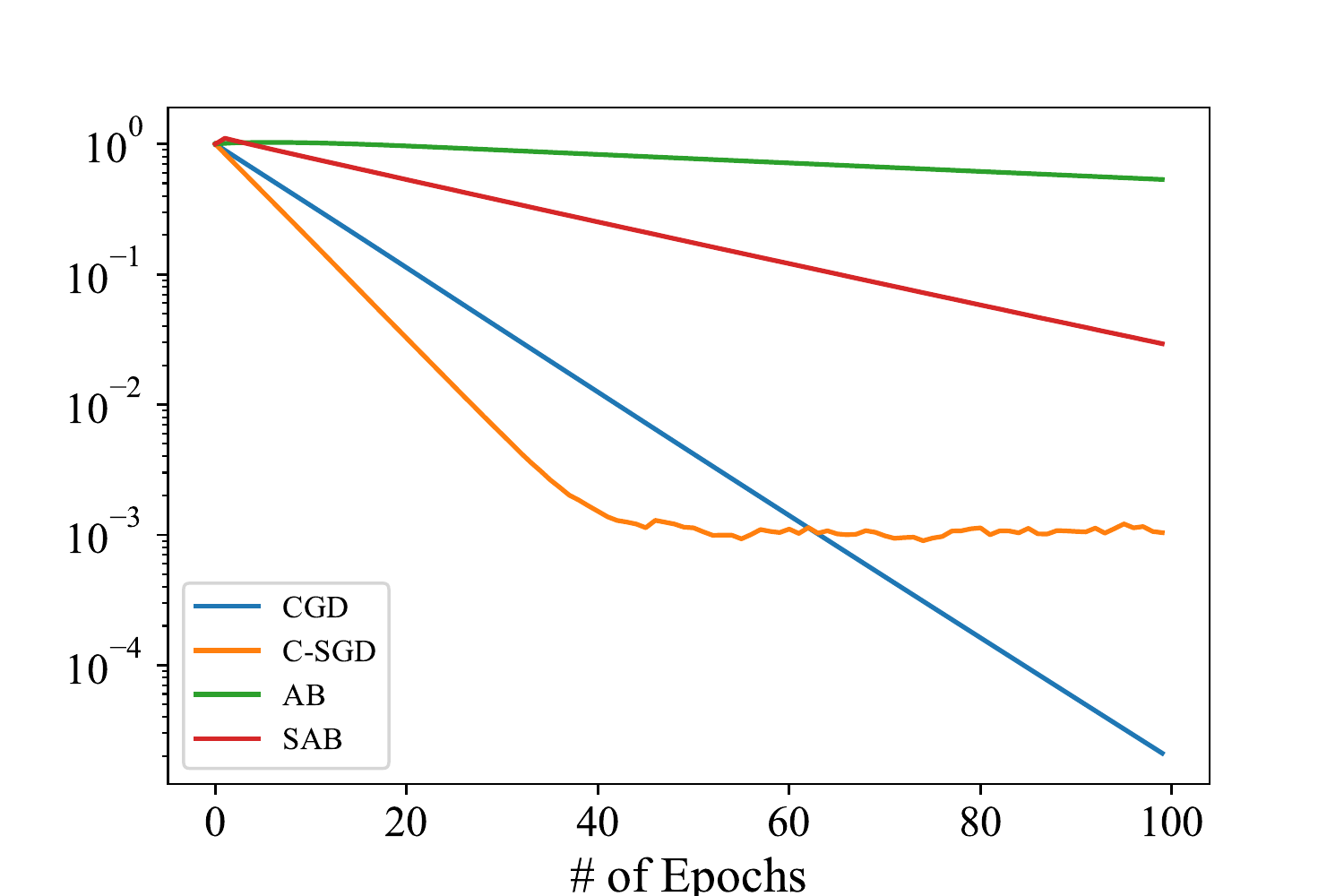}
\caption{(Left) Strongly-connected directed graph. (Right) Residuals.}
\label{gr1}
\end{figure}

\begin{figure}[!h]
\centering
\subfigure{\includegraphics[width=2.5in]{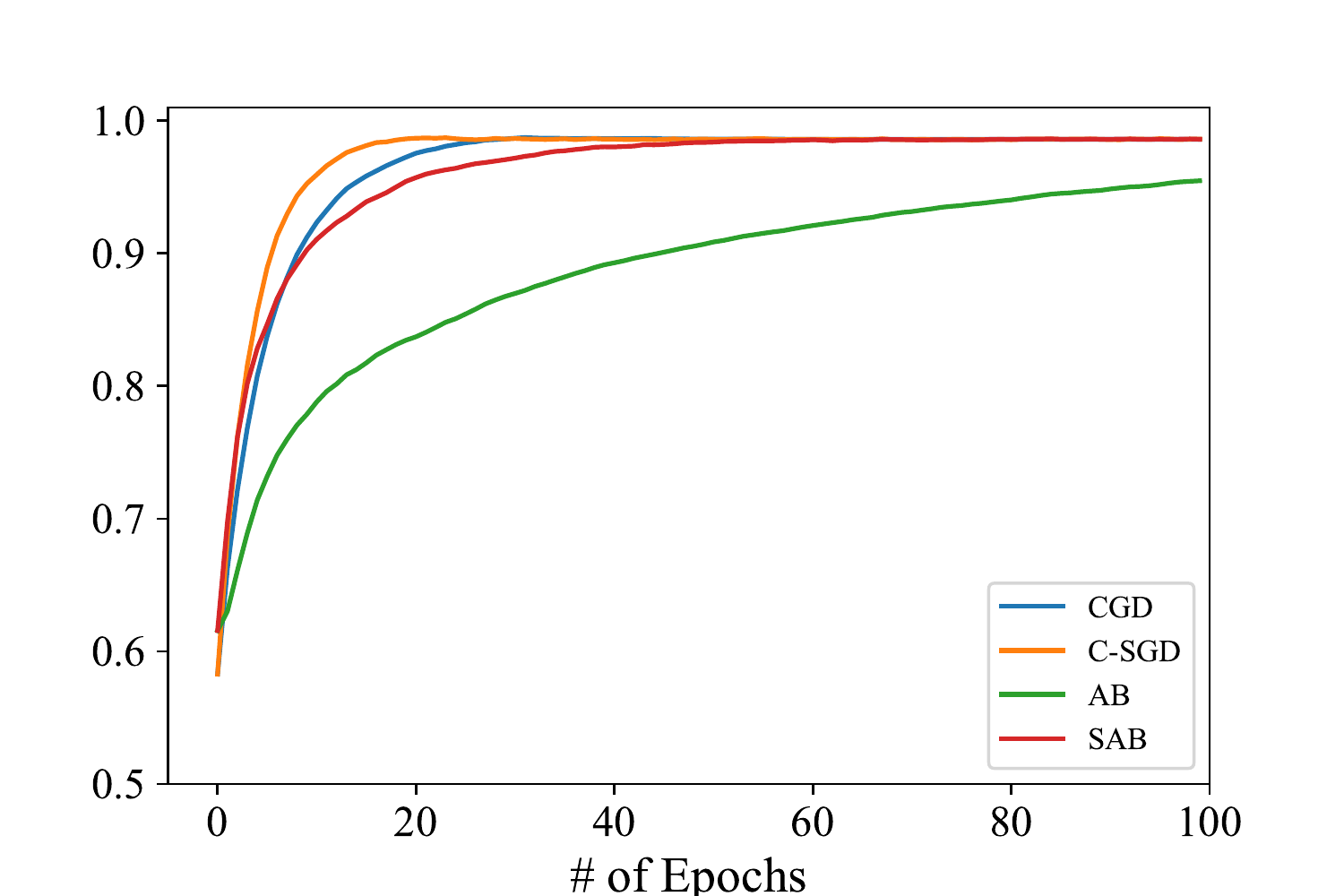}}
\hspace{1cm}
\subfigure{\includegraphics[width=2.5in]{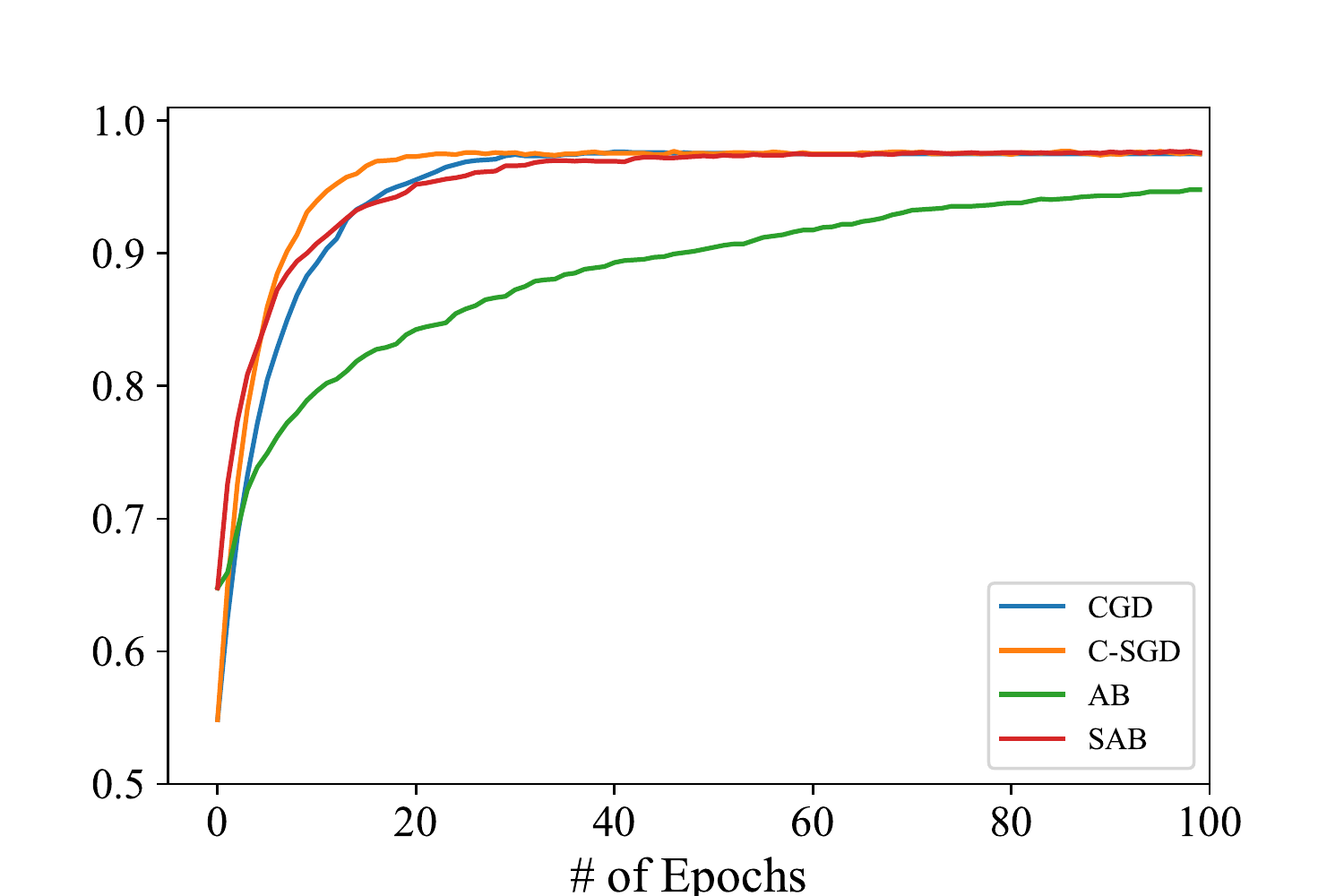}}
\caption{(Left) Training accuracy. (Right) Test accuracy.}
\label{comp1}
\end{figure}

\section{Conclusions}\label{conc}
In this paper, we have presented a stochastic gradient descent algorithm,~$\mc{S}$-$\mc{AB}$, over arbitrary strongly-connected graphs. In this setup, the data is distributed over agents and each agent uniformly samples a data point (from its local batch) at each iteration of the algorithm to implement the stochastic~$\mc{S}$-$\mc{AB}$ algorithm. To cope with general directed communication graphs and potential lack of doubly-stochastic weights,~$\mc{S}$-$\mc{AB}$ employs a two-phase update with row- and column-stochastic weights.  We have shown that under a sufficiently small constant step-size,~$\mc{S}$-$\mc{AB}$ converges linearly to a neighborhood of the global minimizer when the local cost functions are smooth and strongly-convex. We have presented numerical simulations based on real-world datasets to illustrate the theoretical results.

\bibliographystyle{IEEEbib}
\bibliography{sample1.bib}

\appendix
\section*{Proof of Lemma~\ref{A_B}}
\begin{proof}
We start with the proof of~\eqref{A}. Note that~$A\mb{x}-A_\infty\mb{x} = (A-A_\infty)(\mb{x}-A_\infty\mb{x})$ that leads to
\begin{equation*}
\left\|A\mb{x}-A_\infty\mb{x}\right\|_{\bds{\pi}_r} \leq \mn{A-A_\infty}_{\bds{\pi}_r}\left\|\mb{x}-A_\infty\mb{x}\right\|_{\bds{\pi}_r}.
\end{equation*}
By the definition of~$\mn{\cdot}_{\bds{\pi}_r}$ in~\eqref{norm1}, we have
\begin{align*}
\mn{A-A_\infty}_{\bds{\pi}_r} &=  \mn{\mbox{diag}(\sqrt{\bds{\pi}_r})(A-A_\infty)\mbox{diag}(\sqrt{\bds{\pi}_r})^{-1}}_2\triangleq\sqrt{\lambda_{\max} (J)},
\end{align*}
where~$J=\mbox{diag}(\sqrt{\bds{\pi}_r})^{-1}(A-A_\infty)^\top\mbox{diag}(\bds{\pi}_r)(A-A_\infty)\mbox{diag}(\sqrt{\bds{\pi}_r})^{-1}$ and~$\lambda_{\max}(\cdot)$ denotes the largest eigenvalue of the matrix. What we need to show is that~$\rho(J)<1$. Expanding~$J$, we~get
\begin{align*}
J =&~\mbox{diag}(\sqrt{\bds{\pi}_r})^{-1}A^\top\mbox{diag}(\bds{\pi}_r)A\mbox{diag}(\sqrt{\bds{\pi}_r})^{-1} 
- \mbox{diag}(\sqrt{\bds{\pi}_r})^{-1}A_\infty^\top\mbox{diag}(\bds{\pi}_r)A\mbox{diag}(\sqrt{\bds{\pi}_r})^{-1} \\
&- \mbox{diag}(\sqrt{\bds{\pi}_r})^{-1}A^\top\mbox{diag}(\bds{\pi}_r)A_\infty\mbox{diag}(\sqrt{\bds{\pi}_r})^{-1}
+ \mbox{diag}(\sqrt{\bds{\pi}_r})^{-1}A_\infty^\top\mbox{diag}(\bds{\pi}_r)A_\infty\mbox{diag}(\sqrt{\bds{\pi}_r})^{-1},\\
\triangleq&~J_1 - J_2 - J_3 + J_4.
\end{align*}
With the fact that~$A_\infty=\mb{1}_n\bds{\pi}_r^\top$, it can be verified that~$J_2=J_3=J_4=\sqrt{\bds{\pi}}_r\sqrt{\bds{\pi}}_r^\top$ and thus~$J = J_1 - \sqrt{\bds{\pi}}_r\sqrt{\bds{\pi}}_r^\top.$~Furthermore,~$J_1\sqrt{\bds{\pi_r}} = \sqrt{\bds{\pi_r}}$, and~$\sqrt{\bds{\pi_r}}^\top J_1 = \sqrt{\bds{\pi_r}}^\top.$ Since~$J_1$ is primitive, by Perron-Frobenius theorem~\cite{matrix}, we have~$\rho(J) = \rho(J_1 - \sqrt{\bds{\pi}}_r\sqrt{\bds{\pi}}_r^\top)<1$ and thus
$$\sigma_A\triangleq\mn{A-A_\infty}_{\bds{\pi}_r} =  \sqrt{\rho(J_1-\sqrt{\bds{\pi}}_r\sqrt{\bds{\pi}}_r^\top)} 
< 1.$$
To prove~\eqref{B}, we note that~$B\mb{x}-B_\infty\mb{x} = (B-B_\infty)(\mb{x}-B_\infty\mb{x})$ and we have the following:
\begin{equation*}
\left\|B\mb{x}-B_\infty\mb{x}\right\|_{\bds{\pi}_c} \leq \mn{B-B_\infty}_{\bds{\pi}_c}\left\|\mb{x}-B_\infty\mb{x}\right\|_{\bds{\pi}_c}.
\end{equation*}
Next we show that~$\mn{B-B_\infty}_{\bds{\pi}_c}<1$. By the definition of~$\mn{\cdot}_{\bds{\pi}_c}$ in~\eqref{norm2}, we have the following:
\begin{align*}
\mn{B-B_\infty}_{\bds{\pi}_c} 
=&~  \mn{\mbox{diag}(\sqrt{\bds{\pi}_c})^{-1}(B-B_\infty)\mbox{diag}(\sqrt{\bds{\pi}_c})}_2 \triangleq \sqrt{\lambda_{\max}(H)},
\end{align*}
where~$H=\mbox{diag}(\sqrt{\bds{\pi}_c})(B-B_\infty)^\top\mbox{diag}(\bds{\pi}_c)^{-1}(B-B_\infty)\mbox{diag}(\sqrt{\bds{\pi}_c})$. Next we show that~$\rho(H)<1$. We expand the expression for~$H$ as below:
\begin{align*}
H =&~\mbox{diag}(\sqrt{\bds{\pi}_c})B^\top\mbox{diag}(\bds{\pi}_c)^{-1}B\mbox{diag}(\sqrt{\bds{\pi}_c}) 
- \mbox{diag}(\sqrt{\bds{\pi}_c})B_\infty^\top\mbox{diag}(\bds{\pi}_c)^{-1}B\mbox{diag}(\sqrt{\bds{\pi}_c}) \\
&- \mbox{diag}(\sqrt{\bds{\pi}_c})B^\top\mbox{diag}(\bds{\pi}_c)^{-1}B_\infty\mbox{diag}(\sqrt{\bds{\pi}_c})
+ \mbox{diag}(\sqrt{\bds{\pi}_c})B_\infty^\top\mbox{diag}(\bds{\pi}_c)^{-1}B_\infty\mbox{diag}(\sqrt{\bds{\pi}_c}),\\
\triangleq&~H_1 - H_2 - H_3 + H_4.
\end{align*}
With the fact that~$B_\infty = \bds{\pi}_c\mb{1}_n^\top$, one can verify that~$H_2=H_3=H_4=\sqrt{\bds{\pi}_c}\sqrt{\bds{\pi}_c}^\top$ and thus~$H = H_1 - \sqrt{\bds{\pi}}_c\sqrt{\bds{\pi}}_c^\top.$
Since~$H_1$ is primitive, by Perron-Frobenius theorem~\cite{matrix}, we have that
$$\sigma_B\triangleq\mn{B-B_\infty}_{\bds{\pi}_c} =  \sqrt{\rho(H_1-\sqrt{\bds{\pi}}_c\sqrt{\bds{\pi}}_c^\top)} < 1,$$
which completes the proof.
\end{proof}

\section*{Proof of Lemma~\ref{lem_y}}
\begin{proof}
Recall that~$B_\infty=\bds{\pi}_c\mb{1}_n^\top$. We have the following hold:
\begin{align*}
\|\mb{y}_k\|_2 \leq~& \ol{\bds{\pi}_c}^{0.5}\|\mb{y}_k - \bds{\pi}_c\mb{1}_n^\top\mb{y}_k\|_{\bds{\pi}_c} + \|\bds{\pi}_c\mb{1}_n^\top\mb{y}_k\|_{2}\\
=~& \ol{\bds{\pi}_c}^{0.5}\|\mb{y}_k - B_\infty\mb{y}_k\|_{\bds{\pi}_c} + \|\bds{\pi}_c\|_2\|\mb{1}_n^\top\mb{y}_k\|_2 \\
\leq~& \ol{\bds{\pi}_c}^{0.5}\|\mb{y}_k - B_\infty\mb{y}_k\|_{\bds{\pi}_c} + n\|\bds{\pi}_c\|_2\|\ol{\mb{y}}_k-\mb{h}(\mb{x}_k)\|_2 + n\|\bds{\pi}_c\|_2 \|\mb{h}(\mb{x}_k)-\nabla F(\widehat{\mb{x}}_k)\|_2 \nonumber\\
&+ n\|\bds{\pi}_c\|_2\|\nabla F(\widehat{\mb{x}}_k)-\nabla F(\mb{x}^*)\|_2 \\
\leq~& \ol{\bds{\pi}_c}^{0.5}\|\mb{y}_k - B_\infty\mb{y}_k\|_{\bds{\pi}_c} + n\|\bds{\pi}_c\|_2\|\ol{\mb{y}}_k-\mb{h}(\mb{x}_k)\|_2 + \frac{\sqrt{n}\|\bds{\pi}_c\|_2l}{\sqrt{\ul{\bds{\pi}_r}}}\|\mb{x}_k-\mb{1}_n\widehat{\mb{x}}_k\|_{\bds{\pi}_r}\\&+ n\|\bds{\pi}_c\|_2l\|\widehat{\mb{x}}_k-\mb{x}^*\|_2,
\end{align*}
where in the last inequality we used Lemma~\ref{lsm} and norm-equivalence.
Squaring the above, and using the basic inequality~$2ab\leq a^2+b^2$, we get
\begin{align*}
\|\mb{y}_k\|_2^2 \leq& ~4\ol{\bds{\pi}_c}\|\mb{y}_k - B_\infty\mb{y}_k\|^2_{\bds{\pi}_c} + 4n^2\|\bds{\pi}_c\|_2^2\|\ol{\mb{y}}_k-\mb{h}(\mb{x}_k)\|^2_2 \nonumber\\
&+ \frac{4n\|\bds{\pi}_c\|_2^2l^2}{\ul{\bds{\pi}_r}}\|\mb{x}_k-\mb{1}_n\widehat{\mb{x}}_k\|_{\bds{\pi}_r}^2 + 4n^2\|\bds{\pi}_c\|_2^2l^2\|\widehat{\mb{x}}_k-\mb{x}^*\|_2^2.
\end{align*}
Taking the expectation on both sides given~$\mc{F}_k$ and using~(\ref{sg}) in Lemma~\ref{SG} completes the proof.
\end{proof}

\section*{Proof of Lemma~\ref{lem_i}}
\begin{proof}
Recall that~$A_\infty = \mb{1}_n\bds{\pi}_r^\top$ and~$\widehat{\mb{x}}_{k}\triangleq\bds{\pi}_r^\top\mb{x}_k$. Using~\eqref{SABv1} and Lemma~\ref{A_B}, We have the following hold:
\begin{align*}
&\|\mb{x}_{k+1}-\mb{1}_n\widehat{\mb{x}}_{k+1}\|^2_{\bds{\pi}_r}\nonumber\\
=&~\left\|A\mb{x}_k -\alpha\mb{y}_k-A_\infty(A\mb{x}_k -\alpha\mb{y}_k)\right\|^2_{\bds{\pi}_r} \\
=&~\left\|A\mb{x}_k -A_\infty\mb{x}_k -\alpha(I_n-A_\infty)\mb{y}_k\right\|^2_{\bds{\pi}_r}\\
=&~\left\|A\mb{x}_k -A_\infty\mb{x}_k\right\|^2_{\bds{\pi}_r}
+ \alpha^2\left\|(I_n-A_\infty)\mb{y}_k\right\|^2_{\bds{\pi}_r}-2\big\langle A\mb{x}_k -A_\infty\mb{x}_k,\alpha(I_n-A_\infty)\mb{y}_k\big\rangle_{\bds{\pi}_r}\nonumber\\
\leq&~\sigma_A^2\left\|\mb{x}_k -A_\infty\mb{x}_k\right\|^2_{\bds{\pi}_r}+\alpha^2\mn{I_n-A_\infty}^2_{\bds{\pi}_r}\|\mb{y}_k\|^2_{\bds{\pi}_r}+2\alpha\sigma_A\left\|\mb{x}_k -A_\infty\mb{x}_k\right\|_{\bds{\pi}_r}\mn{I_n-A_\infty}_{\bds{\pi}_r}\|\mb{y}_k\|_{\bds{\pi}_r}\\
\leq&~\sigma_A^2\|\mb{x}_k -A_\infty\mb{x}_k\|^2_{\bds{\pi}_r} + \alpha^2\|\mb{y}_k\|^2_{\bds{\pi}_r}+\alpha\sigma_A\left(\frac{1-\sigma_A^2}{2\alpha\sigma_A}\|\mb{x}_k -A_\infty\mb{x}_k\|^2_{\bds{\pi}_r} +\frac{2\alpha\sigma_A}{1-\sigma_A^2}\|\mb{y}_k\|^2_{\bds{\pi}_r}\right)\\
\leq&~\frac{1+\sigma_A^2}{2}\|\mb{x}_k -A_\infty\mb{x}_k\|^2_{\bds{\pi}_r} + \frac{\alpha^2(1+\sigma_A^2)\ol{\bds{\pi}_r}}{1-\sigma_A^2}\|\mb{y}_k\|^2_{2}.
\end{align*}
where the second last inequality uses Young's inequality and the fact that~$\mn{I_n-A_\infty}_{\bds{\pi}_r}=1$. Taking the expectation on both sides given~$\mc{F}_k$ completes the proof.
\end{proof}

\section*{Proof of Lemma~\ref{lem_ii}}
\begin{proof}
We start by multiplying both sides of \eqref{SABv1} with~$\bds{\pi}_r^\top$ to obtain as in~\cite{AB,pushpull_Pu}:
\begin{align*}
\widehat{\mb{x}}_{k+1} - \mb{x}^*
=&~ \widehat{\mb{x}}_{k} - \mb{x}^*- \alpha \bds{\pi}_r^\top(\mb{y}_k-\bds{\pi}_c\mb{1}_n^\top\mb{y}_k+\bds{\pi}_c\mb{1}_n^\top\mb{y}_k) \\
=&~ \widehat{\mb{x}}_{k} -  \mb{x}^*- n\alpha \bds{\pi}_r^\top\bds{\pi}_c\ol{\mb{y}}_k-\alpha\bds{\pi}_r^\top(\mb{y}_k-B_\infty\mb{y}_k).
\end{align*}
Taking norms and squaring both sides leads to
\begin{align}
&\left\|\widehat{\mb{x}}_{k+1} - \mb{x}^*\right\|^2_2 \nonumber\\
=&\left\|\widehat{\mb{x}}_{k} -  \mb{x}^*- n\alpha \bds{\pi}_r^\top\bds{\pi}_c\ol{\mb{y}}_k-\alpha\bds{\pi}_r^\top(\mb{y}_k-B_\infty\mb{y}_k)\right\|^2_2\nonumber\\
=& \left\|\widehat{\mb{x}}_{k} -  \mb{x}^*- n\alpha \bds{\pi}_r^\top\bds{\pi}_c\ol{\mb{y}}_k\right\|^2_2 -2\Big\langle\widehat{\mb{x}}_{k} -  \mb{x}^*- n\alpha \bds{\pi}_r^\top\bds{\pi}_c\ol{\mb{y}}_k,\alpha\bds{\pi}_r^\top(\mb{y}_k-B_\infty\mb{y}_k)\Big\rangle+ \alpha^2\left\|\bds{\pi}_r^\top\left(\mb{y}_k-B_\infty\mb{y}_k\right)\right\|^2_2\nonumber\\
\leq& \left\|\widehat{\mb{x}}_{k} -  \mb{x}^*- n\alpha \bds{\pi}_r^\top\bds{\pi}_c\ol{\mb{y}}_k\right\|^2_2 -2\Big\langle\widehat{\mb{x}}_{k} -  \mb{x}^*- n\alpha \bds{\pi}_r^\top\bds{\pi}_c\ol{\mb{y}}_k,\alpha\bds{\pi}_r^\top(\mb{y}_k-B_\infty\mb{y}_k)\Big\rangle+ \alpha^2\|\bds{\pi}_r\|_2^2\ol{\bds{\pi}_c}\|\mb{y}_k-B_\infty\mb{y}_k\|_{\bds{\pi}_c}^2\nonumber\\
\triangleq&~r_1+r_2+\alpha^2\|\bds{\pi}_r\|_2^2\ol{\bds{\pi}_c}\|\mb{y}_k-B_\infty\mb{y}_k\|_{\bds{\pi}_c}^2. \nonumber
\end{align}
Taking the conditional expectation on bothsides given~$\mc{F}_k$, we obtain:
\begin{align}
\mathbb{E}\left[\left\|\widehat{\mb{x}}_{k+1} - \mb{x}^*\right\|^2_2\big|\mc{F}_k\right]
\leq \mathbb{E}\left[r_1|\mc{F}_k\right] + \mathbb{E}\left[r_2|\mc{F}_k\right] 
+ \alpha^2\|\bds{\pi}_r\|_2^2\ol{\bds{\pi}_c}\mathbb{E}\left[\left\|\mb{y}_k-B_\infty\mb{y}_k\right\|_{\bds{\pi}_c}^2\big|\mc{F}_k\right] \label{r}
\end{align}

\noindent \textbf{Bounding~$\mathbb{E}\left[r_1|\mc{F}_k\right]$: }We first derive an upper bound on~$r_1$. To simplify the notation, we denote~$\widetilde{\alpha}\triangleq \alpha n\bds{\pi}_r^\top\bds{\pi}_c$. If~$0<\widetilde{\alpha}<\frac{1}{l}$, we have the following:
\begin{align*}
r_1 =& \left\|\widehat{\mb{x}}_{k} -  \mb{x}^*- \widetilde{\alpha}\nabla F(\widehat{\mb{x}}_k) + \widetilde{\alpha}\nabla F(\widehat{\mb{x}}_k) - \widetilde{\alpha}\ol{\mb{y}}_k\right\|_2^2 \nonumber\\
=& \left\|\widehat{\mb{x}}_{k} -  \mb{x}^*- \widetilde{\alpha}\nabla F(\widehat{\mb{x}}_k)\right\|_2^2
+ \widetilde{\alpha}^2\left\|\nabla F(\widehat{\mb{x}}_k) - \ol{\mb{y}}_k\right\|_2^2
+ 2\widetilde{\alpha}\Big\langle\widehat{\mb{x}}_{k} -  \mb{x}^*- \widetilde{\alpha}\nabla F(\widehat{\mb{x}}_k),\nabla F(\widehat{\mb{x}}_k) - \ol{\mb{y}}_k\Big\rangle\nonumber\\
\leq&\left(1-\mu\widetilde{\alpha}\right)^2\left\|\widehat{\mb{x}}_{k}-\mb{x}^*\right\|_2^2
+ \widetilde{\alpha}^2\left\|\nabla F(\widehat{\mb{x}}_k) - \ol{\mb{y}}_k\right\|_2^2
+ 2\widetilde{\alpha}\Big\langle\widehat{\mb{x}}_{k} -  \mb{x}^*- \widetilde{\alpha}\nabla F(\widehat{\mb{x}}_k),\nabla F(\widehat{\mb{x}}_k) - \ol{\mb{y}}_k\Big\rangle,
\end{align*}
where in the last inequality above we used Lemma~\ref{cvx_stan}. Then we take the conditional expectation given~$\mc{F}_k$ on bothsides to obtain:
\begin{align}
&\mathbb{E}\left[r_1|\mc{F}_k\right]\nonumber\\
=&\left(1-\mu\widetilde{\alpha}\right)^2\left\|\widehat{\mb{x}}_{k}-\mb{x}^*\right\|_2^2
+ \widetilde{\alpha}^2\mathbb{E}\left[\left\|\nabla F(\widehat{\mb{x}}_k) - \ol{\mb{y}}_k\right\|_2^2\big|\mc{F}_k\right]
+ 2\widetilde{\alpha}\Big\langle\widehat{\mb{x}}_{k} -  \mb{x}^*- \widetilde{\alpha}\nabla F(\widehat{\mb{x}}_k),\nabla F(\widehat{\mb{x}}_k) - \mb{h}\left(\mb{x}_k\right)\Big\rangle \nonumber\\
\leq&\left(1-\mu\widetilde{\alpha}\right)^2\left\|\widehat{\mb{x}}_{k}-\mb{x}^*\right\|_2^2
+ \widetilde{\alpha}^2\mathbb{E}\left[\left\|\nabla F(\widehat{\mb{x}}_k) - \ol{\mb{y}}_k\right\|_2^2\big|\mc{F}_k\right]
+ 2\widetilde{\alpha}\left(1-\mu\widetilde{\alpha}\right)\left\|\widehat{\mb{x}}_{k} -  \mb{x}^*\right\|_2\left\|\nabla F(\widehat{\mb{x}}_k) - \mb{h}\left(\mb{x}_k\right)\right\|_2 \nonumber\\
\leq& \left(1-\mu\widetilde{\alpha}\right)^2\left\|\widehat{\mb{x}}_{k}-\mb{x}^*\right\|_2^2
+ \widetilde{\alpha}^2\mathbb{E}\left[\left\|\nabla F(\widehat{\mb{x}}_k) - \ol{\mb{y}}_k\right\|_2^2\big|\mc{F}_k\right]
\nonumber\\
&+ \widetilde{\alpha}\left(\mu\left(1-\mu\widetilde{\alpha}\right)^2\left\|\widehat{\mb{x}}_{k}-\mb{x}^*\right\|_2^2+
\tfrac{1}{\mu}\left\|\nabla F(\widehat{\mb{x}}_k) - \mb{h}\left(\mb{x}_k\right)\right\|_2^2\right)\nonumber\\
\leq& \left(1-\mu\widetilde{\alpha}\right)\left\|\widehat{\mb{x}}_{k}-\mb{x}^*\right\|_2^2 + \widetilde{\alpha}^2\mathbb{E}\left[\left\|\nabla F(\widehat{\mb{x}}_k) - \ol{\mb{y}}_k\right\|_2^2\big|\mc{F}_k\right] + \frac{\widetilde{\alpha}l^2}{\mu n\ul{\bds{\pi}_r}}\left\|\mb{x}_k-\mb{1}_n\widehat{\mb{x}}_k\right\|_{\bds{\pi}_r}^2,
\label{r11}
\end{align}
where in the second last inequality we used Lemma~\ref{cvx_stan} and Young's inequality and in the last inequality we used Lemma~\ref{lsm} and the fact that~$\left(1-\mu\widetilde{\alpha}\right)^2\left(1+\mu\widetilde{\alpha}\right)<\left(1-\mu\widetilde{\alpha}\right)$. In order to finish bounding~$r_1$, we next bound~$\mathbb{E}\left[\left\|\nabla F(\widehat{\mb{x}}_k) - \ol{\mb{y}}_k\right\|_2^2\big|\mc{F}_k\right]$:
\begin{align}
&\mathbb{E}\left[\left\|\nabla F(\widehat{\mb{x}}_k) - \ol{\mb{y}}_k\right\|_2^2\big|\mc{F}_k\right] \nonumber\\ 
=&~\mathbb{E}\left[\left\|\nabla F(\widehat{\mb{x}}_k) -\mb{h}(\mb{x}_k) + \mb{h}(\mb{x}_k) - \ol{\mb{y}}_k\right\|_2^2\big|\mc{F}_k\right]\nonumber\\
=&\left\|\nabla F(\widehat{\mb{x}}_k) -\mb{h}(\mb{x}_k)\right\|_2^2 + \mathbb{E}\left[\left\|\mb{h}(\mb{x}_k) - \ol{\mb{y}}_k\right\|_2^2\big|\mc{F}_k\right]\nonumber
\end{align}
Using Lemma~\ref{SG} (3) and Lemma~\ref{lsm} leads to:
\begin{align}
\mathbb{E}\left[\left\|\nabla F(\widehat{\mb{x}}_k) - \ol{\mb{y}}_k\right\|_2^2\big|\mc{F}_k\right]
\leq \frac{l^2}{n\ul{\bds{\pi}_r}}\left\|\mb{x}_k-\mb{1}_n\widehat{\mb{x}}_k\right\|_{\bds{\pi}_r}^2 + \frac{\sigma^2}{n} \label{r12}
\end{align}
Plugging in~\eqref{r12} to~\eqref{r11}, we obtain an upper bound on~$\mathbb{E}\left[r_1|\mc{F}_k\right]$ as follows:
\begin{align}
\mathbb{E}\left[r_1|\mc{F}_k\right]\leq&
\left(1-\mu\widetilde{\alpha}\right)\left\|\widehat{\mb{x}}_{k}-\mb{x}^*\right\|_2^2 
+ \frac{\widetilde{\alpha}l^2}{n\ul{\bds{\pi}_r}}\left(\frac{1}{\mu}+\widetilde{\alpha}\right)
\left\|\mb{x}_k-\mb{1}_n\widehat{\mb{x}}_k\right\|_{\bds{\pi}_r}^2 + \frac{2\widetilde{\alpha}^2\sigma^2}{n}\nonumber\\
\leq&
\left(1-\mu\widetilde{\alpha}\right)\left\|\widehat{\mb{x}}_{k}-\mb{x}^*\right\|_2^2 
+ \frac{2\widetilde{\alpha}l^2}{\mu n\ul{\bds{\pi}_r}}
\left\|\mb{x}_k-\mb{1}_n\widehat{\mb{x}}_k\right\|_{\bds{\pi}_r}^2 + \frac{\widetilde{\alpha}^2\sigma^2}{n}\label{r1}
\end{align}
\noindent \textbf{Bounding~$\mathbb{E}\left[r_2|\mc{F}_k\right]$: } Recall that~$r_2=-2\alpha\big\langle\widehat{\mb{x}}_{k} -  \mb{x}^*- 
\widetilde{\alpha}\ol{\mb{y}}_k,\bds{\pi}_r^\top(\mb{y}_k-B_\infty\mb{y}_k)\big\rangle$.
\begin{align*}
r_2 \leq&~2\alpha\left\|\widehat{\mb{x}}_{k} -  \mb{x}^*- 
\widetilde{\alpha}\ol{\mb{y}}_k\right\|_2\left\|\bds{\pi}_r^\top(\mb{y}_k-B_\infty\mb{y}_k)\right\|_2\\
\leq&~\alpha\left(\frac{\mu n\bds{\pi}_r^\top\bds{\pi}_c}{2}r_1+\frac{2\left\|\bds{\pi}_r\right\|_2^2\ol{\bds{\pi}_c}}{\mu n\bds{\pi}_r^\top\bds{\pi}_c}\left\|\mb{y}_k-B_\infty\mb{y}_k\right\|^2_{\bds{\pi}_c}\right)
\end{align*}
Taking the conditional expectation on bothsides given~$\mc{F}_k$ to get:
\begin{align}
\mathbb{E}\left[r_2|\mc{F}_k\right]
\leq&~\frac{\mu\widetilde{\alpha}}{2}\mathbb{E}\left[r_1|\mc{F}_k\right]
+ \frac{2\alpha\left\|\bds{\pi}_r\right\|_2^2\ol{\bds{\pi}_c}}{\mu n\bds{\pi}_r^\top\bds{\pi}_c}\mathbb{E}\left[\left\|\mb{y}_k-B_\infty\mb{y}_k\right\|^2_{\bds{\pi}_c}\big|\mc{F}_k\right] \label{r2}
\end{align}
\noindent \textbf{Bounding~$\mathbb{E}\left[r_1|\mc{F}_k\right]+\mathbb{E}\left[r_2|\mc{F}_k\right]$: } Putting~\eqref{r1} and~\eqref{r2} together, we have:
\begin{align} 
&\mathbb{E}\left[r_1|\mc{F}_k\right]+\mathbb{E}\left[r_2|\mc{F}_k\right]\nonumber\\
\leq&~\left(1+\frac{\mu\widetilde{\alpha}}{2}\right)\frac{2\widetilde{\alpha}l^2}{\mu n\ul{\bds{\pi}_r}}\left\|\mb{x}_k-\mb{1}_n\widehat{\mb{x}}_k\right\|_{\bds{\pi}_r}^2
+ \left(1+\frac{\mu\widetilde{\alpha}}{2}\right)\left(1-\mu\widetilde{\alpha}\right)\left\|\widehat{\mb{x}}_{k}-\mb{x}^*\right\|_2^2
+
\left(1+\frac{\mu\widetilde{\alpha}}{2}\right)\frac{\widetilde{\alpha}^2\sigma^2}{n}\nonumber\\
&+\frac{2\alpha\left\|\bds{\pi}_r\right\|_2^2\ol{\bds{\pi}_c}}{\mu n\bds{\pi}_r^\top\bds{\pi}_c}\left\|\mb{y}_k-B_\infty\mb{y}_k\right\|^2_{\bds{\pi}_c}\nonumber\\
\leq&~\frac{3\widetilde{\alpha}l^2}{\mu n\ul{\bds{\pi}_r}}\left\|\mb{x}_k-\mb{1}_n\widehat{\mb{x}}_k\right\|_{\bds{\pi}_r}^2
+\left(1-\frac{\mu\widetilde{\alpha}}{2}\right)\left\|\widehat{\mb{x}}_k-\mb{x}^*\right\|_{2}^2
+\frac{2\alpha\left\|\bds{\pi}_r\right\|_2^2\ol{\bds{\pi}_c}}{\mu n\bds{\pi}_r^\top\bds{\pi}_c}\left\|\mb{y}_k-B_\infty\mb{y}_k\right\|^2_{\bds{\pi}_c} + \frac{3\widetilde{\alpha}^2\sigma^2}{ 2n},\label{r1r2}
\end{align}
where in the last inequality we use the fact that~$1+\frac{\mu\widetilde{\alpha}}{2}\leq\frac{3}{2}$ and~$(1+\frac{\mu\widetilde{\alpha}}{2})\left(1-\mu\widetilde{\alpha}\right)\leq1-\frac{\mu\widetilde{\alpha}}{2}$.
Plugging~\eqref{r1r2} in~\eqref{r} and replacing~$\widetilde{\alpha}$ by~$ \alpha n\bds{\pi}_r^\top\bds{\pi}_c$ finishes the proof.
\end{proof}

\section*{Proof of Lemma~\ref{lem_iii}}
\begin{proof}
To simplify notation, we define:
\begin{align*}
\begin{array}{ll}
\nabla_k \triangleq \nabla\mb{f}(\mb{x}_k), \qquad&
\nabla_k^i\triangleq\nabla f_i(\mb{x}_k^i),
\\
\mb{g}_k\triangleq\mb{g}(\mb{x}_{k},\bds{\xi}_k),&\mb{g}_k^i\triangleq\mb{g}_i(\mb{x}_{k}^i,\xi_{k}^i).\\
\end{array}
\end{align*}
Starting with~\eqref{SABv2} and using Lemma~\ref{A_B}, we obtain
\begin{align}
&\left\|\mb{y}_{k+1}-B_\infty\mb{y}_{k+1}\right\|_{\bds{\pi}_c}^2\nonumber\\ =&~\|B\mb{y}_{k}-B_\infty\mb{y}_{k}\|_{\bds{\pi}_c}^2\nonumber+\left\|\left(I_n-B_\infty\right)\left(\mb{g}_{k+1}-\mb{g}_k\right)\right\|_{\bds{\pi}_c}^2
+2\Big\langle B\mb{y}_{k}-B_\infty\mb{y}_{k}, (I_n-B_\infty)(\mb{g}_{k+1}-\mb{g}_k)\Big\rangle_{\bds{\pi}_c}\nonumber\\
\leq&~ \sigma_B^2 \left\|\mb{y}_{k}-B_\infty\mb{y}_{k}\right\|_{\bds{\pi}_c}^2 + \left\|\mb{g}_{k+1}-\mb{g}_k\right\|_{\bds{\pi}_c}^2 +2\Big\langle B\mb{y}_{k}-B_\infty\mb{y}_{k}, \mb{g}_{k+1}-\mb{g}_k\Big\rangle_{\bds{\pi}_c}, \label{t0} 
\end{align}
where the last inequality uses~$\mn{I_n-B_\infty}_{\bds{\pi}_c} = 1$ and that
\begin{align*}
&\Big\langle B\mb{y}_{k}-B_\infty\mb{y}_{k}, B_\infty\left(\mb{g}_{k+1}-\mb{g}_k\right)\Big\rangle_{\bds{\pi}_c}=
\Big\langle B\mb{y}_{k}-B_\infty\mb{y}_{k}, \mb{1}_n\mb{1}_n^\top\left(\mb{g}_{k+1}-\mb{g}_k\right)\Big\rangle=0.
\end{align*}
We take the conditional expectation given~$\mc{F}_k$ on both sides of~\eqref{t0} to get:
\begin{align}
\mathbb{E}&\left[\left\|\mb{y}_{k+1}-B_\infty\mb{y}_{k+1}\right\|_{\bds{\pi}_c}^2\big|\mc{F}_k\right] \nonumber\\
\leq
&~\sigma_B^2\mathbb{E}\left[\left\|\mb{y}_{k}-B_\infty\mb{y}_{k}\right\|_{\bds{\pi}_c}^2\big|\mc{F}_k\right]
+
\mathbb{E}\left[\left\|\mb{g}_{k+1}-\mb{g}_k\right\|_{\bds{\pi}_c}^2\big|\mc{F}_k\right]
+2\mathbb{E}\left[\mathbb{E}\left[\big\langle B\mb{y}_{k}-B_\infty\mb{y}_{k}, \mb{g}_{k+1}-\mb{g}_k\big\rangle_{\bds{\pi}_c}\Big|\mc{F}_{k+1}\right]\Big|\mc{F}_k\right]\nonumber\\
=&~\sigma_B^2\mathbb{E}\left[\left\|\mb{y}_{k}-B_\infty\mb{y}_{k}\right\|_{\bds{\pi}_c}^2\big|\mc{F}_k\right]
+
\mathbb{E}\left[\left\|\mb{g}_{k+1}-\mb{g}_k\right\|_{\bds{\pi}_c}^2\big|\mc{F}_k\right]
+2\mathbb{E}\left[\Big\langle B\mb{y}_{k}-B_\infty\mb{y}_{k}, \nabla_k-\mb{g}_k\Big\rangle_{\bds{\pi}_c}\Big|\mc{F}_k\right] 
\nonumber\\
&+2\mathbb{E}\left[\Big\langle B\mb{y}_{k}-B_\infty\mb{y}_{k}, \nabla_{k+1}-\nabla_{k}\Big\rangle_{\bds{\pi}_c}\Big|\mc{F}_k\right],\nonumber\\
\triangleq&~\sigma_B^2\mathbb{E}\left[\left\|\mb{y}_{k}-B_\infty\mb{y}_{k}\right\|_{\bds{\pi}_c}^2\big|\mc{F}_k\right]+s_1+2s_2+s_3. \label{s}
\end{align}
We now bound the last three terms in the following. We start with~$s_1$.

\noindent \textbf{Bounding~$s_1$: }
\begin{align}
s_1 =&~\mathbb{E}\left[\left\|\mb{g}_{k+1}-\mb{g}_k-(\nabla_{k+1}-\nabla_{k})+\nabla_{k+1}-\nabla_{k}\right\|_{\bds{\pi}_c}^2\big|\mc{F}_k\right]\nonumber\\
=&~\mathbb{E}\left[\left\|\nabla_{k+1}-\nabla_{k}\right\|_{\bds{\pi}_c}^2\big|\mc{F}_k\right]
\nonumber\\&+\mathbb{E}\left[\left\|\mb{g}_{k+1}-\mb{g}_k-(\nabla_{k+1}-\nabla_{k})\right\|_{\bds{\pi}_c}^2\big|\mc{F}_k\right]+2\mathbb{E}\left[ \Big\langle\nabla_{k+1}-\nabla_{k},\mb{g}_{k+1}-\mb{g}_k-\left(\nabla_{k+1}-\nabla_{k}\right)\Big\rangle_{\bds{\pi}_c}\Big|\mc{F}_k\right]\nonumber\\
\leq&~\mathbb{E}\left[\left\|\nabla_{k+1}-\nabla_{k}\right\|_{\bds{\pi}_c}^2|\mc{F}_k\right]
+ 2\mathbb{E}\left[\Big\langle\nabla_{k+1},\nabla_{k}-\mb{g}_k\Big\rangle_{\bds{\pi}_c}\Big|\mc{F}_k\right] 
+\frac{2n\sigma^2}{\ul{\bds{\pi}_c}}
\label{s1_0}.
\end{align}
We now bound the first term in the above inequality. Using~\eqref{SABv1}, Lemma~\ref{A_B}, we have
\begin{align}
&\left\|\nabla_{k+1}-\nabla_{k}\right\|_{\bds{\pi}_c}^2
\nonumber\\
\leq&~ \tfrac{l^2}{\ul{\bds{\pi}_c}}\left\|\mb{x}_{k+1}-\mb{x}_{k}\right\|^2_{2}\nonumber\\
=&~ \tfrac{l^2}{\ul{\bds{\pi}_c}}\left\|A\mb{x}_{k}-\alpha\mb{y}_k-\mb{x}_{k}\right\|^2_{2}\nonumber\\
=&~\tfrac{l^2}{\ul{\bds{\pi}_c}}\left\|\left(A-I_n\right)\left(\mb{x}_{k}-A_\infty\mb{x}_{k}\right)-\alpha\mb{y}_k\right\|^2_{2}\nonumber\\
\leq&~\tfrac{8l^2}{\ul{\bds{\pi}_c}\:\ul{\bds{\pi}_r}}\left\|\mb{x}_{k}-A_\infty\mb{x}_{k}\right\|_{\bds{\pi}_r}^2+\tfrac{2\alpha^2l^2}{\ul{\bds{\pi}_c}}\left\|\mb{y}_k\right\|_{2}^2, \label{s1_1}
\end{align}
where in the last inequality we used the basic inequality~$\left\|\mb{x}+\mb{y}\right\|_2^2\leq2\left\|\mb{x}\right\|_2^2+2\left\|\mb{y}\right\|_2^2, \forall\mb{x},\mb{y}\in\mathbb{R}^p$ and that~$\mn{A-I_n}_{\bds{\pi_r}}\leq2$.   
In order to bound the second term in~\eqref{s1_0}, we first note, from~\eqref{SAB1}, that
\begin{align*}
&\nabla_{k+1}^i=\nabla f_i\left(\sum_{j=1}^{n}a_{ij}\mb{x}_k^j \right.\nonumber\left.-\alpha\left(\sum_{j=1}^{n}b_{ij}\mb{y}_{k-1}^j + \mb{g}_{k}^i-\mb{g}_{k-1}^i\right)\right),
\end{align*} 
and we also define
$$\widetilde{\nabla}_{k+1}^i\triangleq\nabla f_i\left(\sum_{j=1}^{n}a_{ij}\mb{x}_k^j -\alpha\left(\sum_{j=1}^{n}b_{ij}\mb{y}_{k-1}^j + \nabla_{k}^i-\mb{g}_{k-1}^i\right)\right).$$
Therefore,~$\big\|\nabla_{k+1}^i-\widetilde{\nabla}_{k+1}^i\big\|_2\leq\alpha l\left\|\nabla_k^i-\mb{g}_{k}^i\right\|_2$.
We then proceed to bound the second term in~\eqref{s1_0} as follows:
\begin{align}
&\mathbb{E}\left[\Big\langle\nabla_{k+1},\nabla_{k}-\mb{g}_k\Big\rangle_{\bds{\pi}_c}|\mc{F}_k\right]\nonumber\\
\leq&~\frac{1}{\ul{\bds{\pi}_c}}\sum_{i=1}^{n}\mathbb{E}\left[\Big\langle\nabla_{k+1}^i-\widetilde{\nabla}_k^i+\widetilde{\nabla}_k^i,\nabla_{k}^i-\mb{g}_{k}^i\Big\rangle\Big|\mc{F}_k\right]\nonumber\\
\leq&~\frac{1}{\ul{\bds{\pi}_c}}\sum_{i=1}^{n}\mathbb{E}\left[\big\|\nabla_{k+1}^i-\widetilde{\nabla}_k^i\big\|_2 \left\|\nabla _{k}^i-\mb{g}_{k}^i\right\|_2\big|\mc{F}_k\right] \nonumber\\
\leq&~\frac{\alpha l}{\ul{\bds{\pi}_c}}\sum_{i=1}^{n}\mathbb{E}\left[\left\|\nabla _{k}^i-\mb{g}_{k}^i\right\|_2^2\big|\mc{F}_k\right]\nonumber\\ 
\leq&~\frac{\alpha n l\sigma^2}{\ul{\bds{\pi}_c}}. \label{s1_2}
\end{align}
Plugging~\eqref{s1_1} and~\eqref{s1_2} to~\eqref{s1_0}, we obtain an upper bound on~$s_1$:
\begin{align}
s_1\leq \frac{8l^2}{\ul{\bds{\pi}_c}\:\ul{\bds{\pi}_r}}\left\|\mb{x}_{k}-A_\infty\mb{x}_{k}\right\|_{\bds{\pi}_r}^2+\frac{2\alpha^2l^2}{\ul{\bds{\pi}_c}}\mathbb{E}\left[\left\|\mb{y}_k\right\|_{2}^2\big|\mc{F}_k\right]+\frac{2\alpha ln \sigma^2}{\ul{\bds{\pi}_c}}+\frac{2n\sigma^2}{\ul{\bds{\pi}_c}}. \label{s1}
\end{align}

\noindent\textbf{Bounding~$s_2$:} We first split~$s_2$ into the sum of two terms as follows:
\begin{align}
s_2=&~\mathbb{E}\left[\Big\langle B\mb{y}_{k}, \nabla_{k}-\mb{g}_k\Big\rangle_{\bds{\pi}_c}\Big|\mc{F}_k\right]
-\mathbb{E}\left[\Big\langle B_\infty\mb{y}_{k}, \nabla_{k}-\mb{g}_k\Big\rangle_{\bds{\pi}_c}|\mc{F}_k\right] \label{s2_0}.
\end{align}
For the first term in~\eqref{s2_0}, using~\eqref{SABv2}, we have
\begin{align}
\mathbb{E}&\left[\Big\langle B\mb{y}_{k},  \nabla_{k}-\mb{g}_k\Big\rangle_{\bds{\pi}_c}\Big|\mc{F}_k\right]
\nonumber\\
=&~
\mathbb{E}\left[\big\langle B^2\mb{y}_{k-1}+B(\mb{g}_k-\mb{g}_{k-1}), \nabla_{k}-\mb{g}_k\big\rangle_{\bds{\pi}_c}\big|\mc{F}_k\right]\nonumber\\
=&~\mathbb{E}\left[\big\langle B\mb{g}_k,  \nabla_{k}-\mb{g}_k\big\rangle_{\bds{\pi}_c}\big|\mc{F}_k\right]\nonumber\\
=&~\sum_{i=1}^{n}\frac{1}{[\bds{\pi}_c]_{i}}\mathbb{E}\left[\Big\langle\sum_{j=1}^{n}b_{ij}\mb{g}_k^j,\nabla_{k}^i-\mb{g}_{k}^i\Big\rangle
\Big|\mc{F}_k\right]\nonumber\\
=&~\sum_{i=1}^{n}\frac{b_{ii}}{[\bds{\pi}_c]_{i}}\mathbb{E}\left[\Big\langle\mb{g}_{k}^i,\nabla_{k}^i-\mb{g}_{k}^i\Big\rangle\big|\mc{F}_k\right]\leq 0.\nonumber
\end{align}
For the second term in~\eqref{s2_0}, we have 
\begin{align*}
-\mathbb{E}&\left[\Big\langle B_\infty\mb{y}_{k}, \nabla_{k}-\mb{g}_k\Big\rangle_{\bds{\pi}_c}\Big|\mc{F}_k\right]
\nonumber\\
=&~-\mathbb{E}\left[\Big\langle \mb{1}_n\mb{1}_n^\top\mb{g}_k, \nabla_{k}-\mb{g}_k\Big\rangle\Big|\mc{F}_k\right]\nonumber\\
=&~\sum_{i=1}^{n}\mathbb{E}\left[\Big\langle \sum_{j=1}^{n}-\mb{g}_k^j, \nabla_{k}^i-\mb{g}_k^i\Big\rangle\Big|\mc{F}_k\right]\nonumber\\
=&~\sum_{i=1}^{n}\mathbb{E}\left[\Big\langle \nabla_{k}^i-\mb{g}_k^i, \nabla_{k}^i-\mb{g}_k^i\Big\rangle\Big|\mc{F}_k\right]
\leq n\sigma^2
\end{align*}
Hence, we have an upper bound on~$s_2$ as follows: 
\begin{equation}
s_2\leq n\sigma^2. \label{s2}
\end{equation}
%

\noindent \textbf{Bounding~$s_3$: } Using the upper bound on~$\left\|\nabla_{k+1}-\nabla_{k}\right\|_{\bds{\pi}_c}^2$ in~\eqref{s1_1}, we proceed towards an upper bound on~$s_3$.
\begin{align}
&2\big\langle B\mb{y}_{k}-B_\infty\mb{y}_{k}, \nabla_{k+1}-\nabla_{k}\big\rangle_{\bds{\pi}_c}
\nonumber\\
\leq&~2\left\|B\mb{y}_{k}-B_\infty\mb{y}_{k}\right\|_{\bds{\pi}_c}\left\|\nabla_{k+1}-\nabla_{k}\right\|_{\bds{\pi}_c}\nonumber\\
\leq&~\frac{1-\sigma_B^2}{2\sigma_B^2}\left\|B\mb{y}_{k}-B_\infty\mb{y}_{k}\right\|_{\bds{\pi}_c}^2+\frac{2\sigma_B^2}{1-\sigma_B^2}\left\|\nabla_{k+1}-\nabla_{k}\right\|_{\bds{\pi}_c}^2\nonumber\\
\leq&~\frac{1-\sigma_B^2}{2}\left\|\mb{y}_{k}-B_\infty\mb{y}_{k}\right\|_{\bds{\pi}_c}^2
+\frac{16\sigma_B^2l^2}{\ul{\bds{\pi}_c}\:\ul{\bds{\pi}_r}(1-\sigma_B^2)}\left\|\mb{x}_{k}-A_\infty\mb{x}_{k}\right\|_{\bds{\pi}_r}^2+\frac{4\sigma_B^2l^2\alpha^2}{\ul{\bds{\pi}_c}(1-\sigma_B^2)}\left\|\mb{y}_k\right\|_{2}^2\nonumber.
\end{align}
Taking the conditional expectation given~$\mc{F}_k$ on bothsides of the inequality above, we obtain an upper bound on~$s_3$ as follows:
\begin{align}
s_3 \leq \frac{16\sigma_B^2l^2}{\ul{\bds{\pi}_c}\:\ul{\bds{\pi}_r}(1-\sigma_B^2)}\left\|\mb{x}_{k}-A_\infty\mb{x}_{k}\right\|_{\bds{\pi}_r}^2 + \frac{1-\sigma_B^2}{2}\mathbb{E}\left[\left\|\mb{y}_{k}-B_\infty\mb{y}_{k}\right\|_{\bds{\pi}_c}^2\big|\mc{F}_k\right]
+\frac{4\sigma_B^2l^2\alpha^2}{\ul{\bds{\pi}_c}(1-\sigma_B^2)}\mathbb{E}\left[\left\|\mb{y}_k\right\|_{2}^2|\mc{F}_k\right] \label{s3}
\end{align}
\end{proof}
Plugging the bounds on~$s_1,s_2,s_3$ in~\eqref{s1},~\eqref{s2} and~\eqref{s3} into~\eqref{s} completes the proof.
\end{document}